\newcommand{\Omit}[1]{}
\newcommand{\tup}[1]{\ensuremath{\langle #1 \rangle}}
\newcommand{\set}[1]{\ensuremath{\{ #1 \}}}
\newcommand{\mbracket}[1]{\ensuremath{[\![ #1 ]\!]}}
\renewcommand{\O}{\mathcal{O}}
\newcommand{\citeau}[1]{\citeauthor{#1}\xspace(\citeyear{#1})\xspace}
\newcommand{\genex}{\textsc{\small GenEx}\xspace}
\newcommand{\wrapper}{\textsc{\small Wrapper}\xspace}
\newcommand{\dlplan}{\textsc{\small Dlplan}\xspace}
\newcommand{\tarski}{\textsc{\small Tarski}\xspace}
\newcommand{\siw}{\textsc{\small SIW}\xspace}
\newcommand{\siwr}{\textsc{\small SIWR}\xspace}
\newcommand{\bfws}{\textsc{\small BFWS}\xspace}
\renewcommand{\S}{\mathcal{S}}
\newcommand{\T}{\mathcal{T}}
\newcommand{\Q}{\mathcal{Q}}
\newcommand{\F}{\mathcal{F}}
\newcommand{\G}{\mathcal{G}}
\renewcommand{\H}{\mathcal{H}}
\newcommand{\Th}{\text{Th}}
\newcommand{\BLT}{\text{BLT}}
\newcommand{\DOMAIN}[1]{{\small\Verb!#1!}\xspace}
\newcommand{\Ord}{\text{Ord}}
\newcommand{\EQ}[1]{\ensuremath{#1\,{=}\,0}}
\newcommand{\GT}[1]{\ensuremath{#1\,{>}\,0}}
\newcommand{\INC}[1]{\ensuremath{#1\raisebox{1.10pt}{\smaller$\uparrow$}}}
\newcommand{\DEC}[1]{\ensuremath{#1\raisebox{1.10pt}{\smaller$\downarrow$}}}
\newcommand{\UNK}[1]{\ensuremath{#1?}}
\newcommand{\arule}[2]{\ensuremath{#1\,{\mapsto}\, #2}}
\newcommand{\prule}[2]{\arule{\set{#1}}{\set{#2}}}
\newcommand{\cost}{\text{cost}}
\newcommand{\cond}{\text{cond}}
\newcommand{\eff}{\text{eff}}
\newcommand{\sieve}{\textsc{Sieve}\xspace}
\newcommand{\X}{\mathcal{X}}
\newcommand{\FAIL}{\textsc{Failure}\xspace}
\newtheorem{definition}{Definition}
\newtheorem{theorem}[definition]{Theorem}
\newtheorem*{example*}{Example}
\newcommand{\hector}[1]{\textcolor{red}{Hector: #1}}
\title{Learning General Policies From Examples}
\author{%
  Blai Bonet$^1$\and
  Hector Geffner$^2$ \\
  \affiliations
  $^1$Universitat Pompeu Fabra, Spain \\
  $^2$RWTH Aachen University, Germany \\
  \emails
  bonetblai@gmail.com, hector.geffner@ml.rwth-aachen.de
}
\begin{document}
\allowdisplaybreaks
\maketitle

\begin{abstract}
  Combinatorial methods for learning general policies that solve
  large collections of planning problems have been recently developed.
  One of their strengths, in relation to deep learning approaches, is
  that the resulting policies can be understood and shown to be correct.
  A weakness is that the methods do not scale up
  and learn only from small training instances and feature pools
  that contain a few hundreds of states and features at most.
  In this work, we propose a new symbolic method for learning
  policies based on the generalization of sampled plans
  that ensures structural termination and hence acyclicity.
  The proposed learning approach is not based on SAT/ASP, as previous
  symbolic methods, but on a hitting set algorithm that can effectively handle problems
  with millions of states, and pools with hundreds of thousands of features.
  The formal properties of the approach are analyzed, and its scalability
  is tested on a number of benchmarks.
\end{abstract}

\section{Introduction}
\label{sect:introduction}

The problem of learning policies that solve large collections of planning problems
is an important challenge in both planning and reinforcement learning.
Symbolic methods yield general policies that can be shown to be correct
but are limited to small training instances and feature pools containing
at most hundreds of states and features \cite{frances:aaai2021}. Deep learning methods,
on the other hand, do not require feature pools and scale up gracefully,
yet the resulting policies are opaque and, in general, do not generalize
equally well \cite{sylvie:asnet,stahlberg-et-al-kr2022}.

The aim of this work is to develop a different way of learning
general policies in the symbolic setting that scales up to much larger
training instances and feature pools, including millions of states
and hundreds of thousands of features. This scalability is also needed to
address a limitation that is shared by the symbolic and deep learning
approaches, and which has to do with the type of state features that
can be computed.
When description logic grammars or graph neural networks are used, the
only logical features that can be captured are those that can be defined
in first-order logic with two variables and
counting quantifiers \cite{barcelo:gnn,grohe:gnn}. 
Addressing this limitation in the symbolic setting, calls for
novel and richer feature grammars that result in larger feature
pools, thus requiring more scalable learning algorithms.

Previous symbolic methods like \cite{frances:aaai2021,drexler:icaps2022} scale up poorly because they
reduce the task of learning the features and the policy
to a combinatorial optimization
problem that is cast and solved by weighted-SAT or ASP solvers.
In these settings, relaxing the optimality criterion, or some of the
constraints, yield policies that do not generalize well outside the
training set.
In this work, we use an scalable, heuristic algorithm for min-cost
hitting set problems as the basis of a new procedure for learning
general policies.

For obtaining the new formulation, two ingredients are needed.
First, a classical planner that generates plans which are then generalized.
Second, a new structural termination criterion that ensures that the
generalization does not introduce cycles.
Provided with this guarantee, the plan generalizations become fully
general policies when they are also \emph{closed} and \emph{safe};
meaning that they do not reach states where the policy is undefined
or which are dead ends, respectively.
The resulting algorithm consists of an efficient, polynomial-time,
core algorithm, based on min-cost hitting sets, that yields a policy
that generalizes given sets of positive and negative state transitions,
$\X^+$ and $\X^-$, and a wrapper algorithm that manipulates these two
sets until the resulting policy is closed and safe, and hence correct.
Interestingly, when the algorithm \emph{fails} to produce a correct policy,
the reasons for the failure can be understood, and sometimes, fixed.

The rest of the paper is structured as follows.
We discuss related work, cover relevant
background, introduce the new termination
criterion, and present the resulting formulation
and algorithm, assessing its properties and
its performance.

\section{Related Work}
\label{sect:related}

\noindent\textbf{General policies.} The problem of learning general policies has a long history
\cite{khardon:action,fern:generalized,srivastava08learning,hu:generalized,BelleL16,sheila:generalized2019,sergio:generalized}.
General policies have been formulated in terms of first-order logic \cite{srivastava:generalized,sheila:generalized2019},
first-order regression \cite{boutilier2001symbolic,wang2008first,sanner:practicalMDPs},
and neural networks \cite{sid:sokoban,trevizan:dl,sanner:dl,erez:generalized,mausam:dl2,simon:kr2023}.
Our work builds on formulations where the features and policies defined on such features and rules are learned using SAT
encodings \cite{bonet:ijcai2018,frances:aaai2021}, and is also related to early supervised approaches that use polynomial
algorithms and explicit feature pools \cite{martin:generalized}.

\medskip

\noindent \textbf{QNPs, Termination, Acyclicity.} While a policy that is closed, safe, and acyclic must solve a problem,
enforcing acyclicity is not easy computationally, as it is not a local property. Interestingly, structural criteria and
algorithms that ensure acyclicity have been developed in the setting of qualitative numerical planning problems or
QNPs \cite{sid:aaai2011,bonet:qnps}, which involve Boolean and numerical variables that can be increased and decreased by
random amounts.
A policy \emph{terminates} in a QNP if all ``fair'' trajectories are finite, where trajectories where a variable increases
\emph{finitely} often and decreases infinitely often are regarded as ``unfair'', and can be ignored.
An algorithm called \sieve establishes termination in time that is exponential in the number of variables in the QNP.
In this work a new termination criterion is introduced that is slightly weaker than \sieve but that is more convenient and can be built into the selection of the features.

\medskip

\noindent \textbf{Imitation learning and inverse reinforcement learning (IRL).} The use of sampled plans for learning general policies
has been used in early work \cite{khardon:action,martin:generalized,fern:generalized}, and its a common idea in imitation learning \cite{ng-russell:inverse-rl,ho-ermon:il}.
Learning to imitate plans or behavior in a mindless manner, however, prevents good generalization. The idea in IRL is to learn the reward distribution
from the examples and then solve the underlying problem where these rewards are to be optimized. The problem in IRL is in the assumptions that need
to be made so that the resulting task is well-posed. In this work, we also aim to go beyond the examples (plans) and solve the more general problem
that the examples illustrate. The task is not just to generalize the given plans, but to obtain a policy that is
structurally terminating and which hence must converge to a goal.

\section{Background}
\label{sect:background}

We review classical planning, generalized planning,
rule-based policies, learning these policies, and termination,
following \cite{frances:aaai2021,bonet:jair=2024}.

\subsection{Planning and Generalized Planning}
\label{sect:background:planning}

We deal with planning instances $P\,{=}\,\tup{D,I}$ where $D$
is a \textbf{domain} specification containing object types, constants, predicate
signatures, and action schemas, and $I$ is an \textbf{instance} specification
containing the objects and their types, and the description of the initial
and goal situations, $I$ and $G$ respectively, both as sets of ground atoms.

A state trajectory in $P$ seeded at $s_0$ is a state sequence $\tau\,{=}\,\tup{s_0,s_1,s_2,\ldots}$
such that for each transition $(s_i,s_{i+1})$, there is a ground
action that maps $s_i$ to $s_{i+1}$.
A state $s$ is \textbf{reachable} iff there is a trajectory seeded at the initial state that ends in $s$;
it is a \textbf{dead end} iff it is reachable, and there is no state trajectory seeded at $s$ that ends in a goal state;
and it is \textbf{alive} iff it is reachable, and it is not a goal nor a dead-end state.
The task for $P$ is to find a trajectory seeded at the initial state that ends in a goal state,
or declare that no such trajectory exist; a sequence of ground actions that
map each state into the next in such a trajectory is a \textbf{plan.}

Semantically, a policy $\pi$ for $P$ is a \textbf{set} of state transitions $(s,t)$ in $P$.
A $\pi$-trajectory from state $s_0$ is a state trajectory $\tau\,{=}\,\tup{s_0,s_1,s_2,\ldots}$
such that $(s_i,s_{i+1})$ is in $\pi$. 
The states in the trajectory are said to be $\pi$-reachable in $P$.
The policy $\pi$ \textbf{solves} $P$ iff each \textbf{maximal} $\pi$-trajectory from
the initial state is finite and ends in a goal state.
The policy $\pi$ is \textbf{closed} if for each alive state $s$ that is $\pi$-reachable, there is $\pi$-transition $(s,s')$;
it is \textbf{safe} if it does not reach a dead end; and
it is \textbf{acyclic} if there is no \textbf{infinite} $\pi$-trajectory seeded at the initial state.
The following characterizes policies that solve $P$:

\begin{theorem}[Solutions for $P$]
  \label{thm:solutions:P}
  Let $P$ be a planning instance, and let $\pi$ be a policy for $P$.
  Then, $\pi$ solves $P$ iff $\pi$ is closed, safe, and acyclic in $P$.
\end{theorem}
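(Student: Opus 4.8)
The plan is to prove the two implications separately, relying only on the definitions of \emph{closed}, \emph{safe}, and \emph{acyclic}, together with the elementary fact that every $\pi$-trajectory is in particular a state trajectory in $P$; hence every $\pi$-reachable state is reachable, and once a $\pi$-trajectory enters a dead end it can never reach a goal.

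For the forward direction, assume $\pi$ solves $P$, i.e., every maximal $\pi$-trajectory from the initial state is finite and ends in a goal state. Acyclicity is immediate: an infinite $\pi$-trajectory seeded at $s_0$ would itself be maximal, contradicting finiteness. For closedness, suppose some alive, $\pi$-reachable state $s$ had no outgoing $\pi$-transition; prepending a $\pi$-trajectory from $s_0$ to $s$ yields a maximal $\pi$-trajectory ending at $s$, which is not a goal since $s$ is alive, a contradiction. For safety, suppose a dead-end state $s$ were $\pi$-reachable; take a $\pi$-trajectory $\tup{s_0,\ldots,s}$ and extend it to a maximal $\pi$-trajectory $\tau$. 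Since $\tau$ is a state trajectory and $s$ is a dead end, no state of $\tau$ at or after $s$ can be a goal, so $\tau$ is either infinite or ends in a non-goal state; either way $\pi$ does not solve $P$, a contradiction. Hence $\pi$ is closed, safe, and acyclic.

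For the converse, assume $\pi$ is closed, safe, and acyclic, and let $\tau$ be any maximal $\pi$-trajectory from $s_0$. By acyclicity $\tau$ is finite, say $\tau\,{=}\,\tup{s_0,\ldots,s_n}$, and by maximality $s_n$ has no outgoing $\pi$-transition. It remains to show $s_n$ is a goal. Since $s_n$ is $\pi$-reachable it is reachable, and by safety it is not a dead end; were it also not a goal, it would be alive, and closedness would provide an outgoing $\pi$-transition from $s_n$, contradicting maximality. Thus $s_n$ is a goal, so every maximal $\pi$-trajectory from $s_0$ is finite and goal-terminating, i.e., $\pi$ solves $P$.

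The argument is essentially routine; the only step that needs a little care is the safety case of the forward direction, where one must argue that a $\pi$-trajectory having once entered a dead end cannot recover a goal afterward — this is exactly where the observation that $\pi$-trajectories are ordinary state trajectories, combined with the definition of a dead end as a reachable state from which no state trajectory reaches a goal, is used.
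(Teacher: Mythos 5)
Your proof is correct. The paper states this theorem as background without proof (it is imported from prior work), so there is nothing to compare against; your argument is the standard one, and all three cases are handled properly — in particular you correctly identify the one non-trivial point, namely that once a maximal $\pi$-trajectory enters a dead end it can neither terminate at a goal nor recover one later, because its suffix is an ordinary state trajectory seeded at that dead end.
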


\subsection{Generalized Planning}
\label{sect:background:generalized}

Generalized planning deals with the computation of policies that solve
collections of planning instances rather than just a single planning
instance.
A collection of planning instances is a, finite or infinite, set $\Q$
of instances $P_i\,{=}\,\tup{D,I_i}$ over a common domain $D$.
In some cases, all the instances in $\Q$ have the same or similar goal,
like achieving a specific atom, but this is not required, nor assumed.

Semantically, a policy $\pi$ for $\Q$ represents a subset of state transitions in each instance $P$ in $\Q$.
The policy $\pi$ solves $\Q$ iff it solves each instance $P$ in $\Q$.
Notions about states like reachable, alive, goal, etc., and about
policies like closed, safe, etc., are naturally lifted from $P$
into $\Q$. A similar characterization for solutions for $\Q$ applies:

\begin{theorem}[Solutions for $\Q$]
  \label{thm:solutions:Q}
  Let $\Q$ be a collection of planning instances, and let $\pi$ be a policy for $\Q$.
  Then, $\pi$ solves $\Q$ iff $\pi$ is closed, safe, and acyclic in $\Q$.
\end{theorem}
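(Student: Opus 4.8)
The plan is to reduce Theorem~\ref{thm:solutions:Q} directly to Theorem~\ref{thm:solutions:P}, exploiting the fact that all the relevant notions --- closed, safe, acyclic, and ``solves'' --- are defined on $\Q$ by quantifying over the instances $P$ in $\Q$. So the whole argument is essentially a bookkeeping exercise that pushes the per-instance characterization through a universal quantifier.

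First I would unpack the definitions. By the definition of ``$\pi$ solves $\Q$'', we have $\pi$ solves $\Q$ iff $\pi$ solves $P$ for every $P \in \Q$. By the way the policy notions are lifted, $\pi$ is closed (resp.\ safe, acyclic) in $\Q$ iff $\pi$ is closed (resp.\ safe, acyclic) in $P$ for every $P \in \Q$. Then the equivalence is obtained by chaining: $\pi$ solves $\Q$ iff ($\forall P\in\Q$) $\pi$ solves $P$ iff ($\forall P\in\Q$) $\pi$ is closed, safe, and acyclic in $P$ --- where the middle step is exactly Theorem~\ref{thm:solutions:P} applied instance by instance --- iff $\pi$ is closed in $\Q$ and safe in $\Q$ and acyclic in $\Q$. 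The last step uses the elementary fact that a conjunction commutes with a universal quantifier, i.e.\ $(\forall P)(A_P \wedge B_P \wedge C_P)$ is equivalent to $(\forall P) A_P \wedge (\forall P) B_P \wedge (\forall P) C_P$.

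The one point that needs a little care --- and the only place I would expect a subtlety --- is making sure that the lifted notions really are defined pointwise over $\Q$ in the way the chaining requires, since the text says they are ``naturally lifted'' without spelling this out. Concretely, I would confirm that ``$\pi$ is closed in $\Q$'' means ``for each $P\in\Q$, for each alive $\pi$-reachable state $s$ in $P$ there is a $\pi$-transition $(s,s')$ in $P$'', and similarly for safe and acyclic; since a policy for $\Q$ is by definition just a choice of a subset of transitions in each $P\in\Q$, there is no cross-instance interaction, so this pointwise reading is the intended one and the chaining goes through. If $\Q$ is infinite the argument is unchanged, as all quantifiers range over $\Q$ uniformly and no finiteness is used.

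Thus the proof is a three-line equivalence chain whose only real content is citing Theorem~\ref{thm:solutions:P} under the quantifier; no new combinatorial or topological argument is needed, and in particular the hard direction (that closed, safe, acyclic implies solving) is entirely inherited from the single-instance theorem.
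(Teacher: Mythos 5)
Your proof is correct and is exactly the intended argument: the paper states this theorem without proof as a background result, precisely because it follows from Theorem~\ref{thm:solutions:P} by the pointwise lifting of ``solves,'' ``closed,'' ``safe,'' and ``acyclic'' to $\Q$ and the commutation of conjunction with the universal quantifier over instances. Your identification of the pointwise reading of the lifted notions as the only point needing care is also right, and the observation that finiteness of $\Q$ is not needed is consistent with the paper's allowance of infinite collections.
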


\subsection{Features}
\label{sect:features}

General policies can be represented in terms of collections of feature-based rules.
Features are functions that map states into values.
Boolean features take values in \set{0,1}, and numerical features take
values in the non-negative integers. In logical accounts, features are commonly defined in terms of
concepts (unary predicates), which are \textbf{generated} from sets of \textbf{atomic} concepts and roles,
using description logic grammars \cite{martin:generalized,fern:generalized,bonet:aaai2019}.
The \textbf{denotation} of concept $C$ (resp.\ role $R$) in a state $s$ is a set of objects (resp.\ pairs
of objects) from $s$, denoted by $C(s)$ (resp.\ $R(s)$).
Concept $C$ defines a numerical feature $f_C$ whose value at $s$ is the cardinality $|C(s)|$ of the set $C(s)$.
When the value of $f_C$ is always in \set{0,1}, the concept defines a Boolean rather than a numerical feature.

For a domain $D$, the atomic concepts (resp.\ roles) are given by the object
types, constants, and unary predicates (resp.\ the binary predicates) in $D$.
A \textbf{pool of features} $\F$ can be generated using the domain $D$, and parameters
that bound the maximum complexity and depth for the features in $\F$.
The generation process is given a set $\T$ of transitions over instances in $D$
to \textbf{prune redundant} features; namely, if $\S$ is the set of
states mentioned in the transitions in $\T$, a feature $f$ is redundant if there is
a feature $g$ of lesser complexity, or equal complexity but earlier in a static ordering,
such that both are $\S$-equivalent, or $\T$-equivalent.
Feature $f$ is $\S$-equivalent to $g$ iff $f(s)\,{=}\,g(s)$ for each state $s$ in $\S$,
and $f$ is $\T$-equivalent to $g$ if for each $(s,t)$ in $\T$, both have the
same Boolean valuation at $s$, and both change equivalently across $(s,t)$ (i.e.,
$\GT{f(s)}$ iff $\GT{g(s)}$, and both increase/decrease/stay equal across $(s,t)$).

In the generalized planning setting, states $s$ for an instance $P$ are assumed
to contain the description of the goal in $P$ via \textbf{goal predicates} $p_g$ \cite{martin:generalized},
one for each predicate $p$ in $D$ with denotation $\set{ \bar u \mid G \vDash p(\bar u)}$.
These predicates allow policies to work for different goals $G$.

\begin{example*}
  Let us consider the domain for Blocksworld with 4 operators (i.e., with a gripper)
  and an instance $P$ whose goal description is $G\,{=}\,\set{clear(A)}$.
  The domain description contains the predicates $clear/1$ and $on/2$. 
  The following concepts are generated by the grammar:
  \begin{enumerate}[--]
    \item `$\text{clear}_g$' whose denotation is the singleton with block $A$,
    \item `$\exists \text{on}.\top$' whose denotation consists of the blocks that rest on another block ($\top$ is the concept that includes all objects), and
    \item `$\exists \text{on}^+.\text{clear}_g$' whose denotation consists of the blocks that are above block $A$.
  \end{enumerate}
  The second concept defines a feature that counts the number of blocks that rest on another block,
  while the third defines a feature that counts the number of blocks above the ``target'' block $A$.
  \qed
\end{example*}

For a set $\G$ of features, a \textbf{Boolean valuation} $\nu$ is a function $\nu\,{:}\,\G\,{\rightarrow}\,\set{0,1}$
that assigns a Boolean value to all the features in a state, whether Boolean or numerical,
as $\EQ{\nu(f)}$ if $\EQ{f(s)}$, and $\nu(f)\,{=}\,1$ if $\GT{f(s)}$. In these valuations,
the exact value of numerical features is abstract away, replaced by a Boolean that is true
iff the value is strictly positive.


\subsection{Rule-based Policies}
\label{sect:background:policies}

While semantically, policies select subset of state transitions in each instance $P$ in $\Q$,
syntactically, policies are represented by collection of \textbf{rules} over a given set of features $\F$.
A policy rule \arule{C}{E} consists of a condition $C$, and an effect $E$,
where $C$ contains expressions like $p$ and $\neg p$ for Boolean features $p$,
and $\EQ{n}$ and $\GT{n}$ for numerical features $n$.
The effect $E$ in turn contains expressions like $p$, $\neg p$, and $\UNK{p}$
for Booleans $p$, and $\INC{n}$, \DEC{n}, and \UNK{n} for numericals $n$.

A set of rules defines a policy $\pi$ where a state transition $(s,t)$
is a $\pi$-transition if it is compatible with one of the rules $r=\arule{C}{E}$.
This is true if the feature conditions in $C$ are true in $s$, and the
features change in the transition in a way that is compatible with $E$;
i.e., $(s,t)$ is compatible with $r$ iff

\begin{enumerate}[--]
  \item if $p$ (resp.\ $\neg p$) in $C$, $p(s)$ (resp.\ $\neg p(s)$),
  \item if $p$ (resp.\ $\neg p$) in $E$, $p(t)$ (resp.\ $\neg p(t)$),
  \item if $\set{p,\neg p,\UNK{p}}\cap E=\emptyset$, $p(s)$ iff $p(t)$,
  \item if $\EQ{n}$ (resp.\ $\GT{n}$) in $C$, $\EQ{n(s)}$ (resp.\ $\GT{n(s)}$),
  \item if $\INC{n}$ (resp.\ $\DEC{n}$) in $E$, $n(s)\,{<}\,n(t)$ (resp.\ $n(s)\,{>}\,n(t)$), and
  \item if $\set{\INC{n},\DEC{n}p,\UNK{n}}\cap E=\emptyset$, $n(s)\,{=}\,n(t)$.
\end{enumerate}
The transition $(s,t)$ is in $\pi$ if it is compatible with some rule $r$ in $\pi$;
we use notations like $(s,t)\,{\in}\,\pi$, and $\set{(s,t)}\subseteq\pi$.

Feature-based rules permit the representation of general policies that are
not tied to a particular set of instances, as those used for learning such a policy,
as the features and rules can be evaluated on any instance for the shared domain.

\begin{example*}
  A general policy for Blocksworld instances with a gripper and goal
  descriptions of the form $G\,{=}\,\set{clear(A)}$, for arbitrary
  block $A$, can be expressed with only two rules \prule{\neg H, \GT{n}}{H, \DEC{n}}
  and \prule{H}{\neg H}, where $H$ is a Boolean feature that tells
  whether a block is being held, and $n$ counts the number of blocks
  above the target block (i.e., the block mentioned in $G$).
  The first rule says to pick block above the target when holding nothing,
  while the second to put the block being held somewhere \emph{not}
  above the target (this last condition is achieved as the effect
  of the second rule requires $n$ to be remain constant).

  Notice, however, that if the second rule is replaced by \prule{H}{\neg H, \UNK{n}},
  the resulting policy does not solve such instances as it can
  generate infinite trajectories where the same block is picked
  and put back above the target block, repeatedly.
  The policy above cannot generate infinite
  trajectories, independently of the interpretation of $H$ and $n$;
  we say that it is \emph{structurally terminating.}
  \qed
\end{example*}

\subsection{Learning Rule-based Policies}
\label{sect:background:learning}

The new method for learning general policies builds on the SAT approach
developed by \citeau{frances:aaai2021} that constructs a CNF theory
$\Th(\T,\F)$ from a set $\T$ of state transitions $(s,t)$ in a collection
$\Q'$ of training instances, and a pool $\F $ of Boolean and numerical features.
The theory contains propositions $good(s,t)$ and $select(f)$ that tell
which transitions $(s,t)$ in $\T$ and features $f$ in $\F$ should be
included in the resulting policy $\pi$.
The policies $\pi$ over the features in $\F$ that solve $\Q'$ are in
\emph{correspondence} with the models of $\Th(\T,\F)$.
Indeed, the rules in $\pi$ are determined by the true
$good(s,t)$ and $select(f)$ atoms: each good transition $(s,t)$ yields
a rule \arule{C}{E} where $C$ captures the Boolean valuations of the
selected features at state $s$, and $E$ captures the changes of such
features across $(s,t)$.
Such a policy rule is called the \textbf{projection} of the transition
$(s,t)$ over the set of selected features.

By constructing a policy $\pi^*$ from a satisfying assignment of
\emph{minimum cost}, as determined by the complexity of the selected features,
the policy $\pi^*$ is expected to generalize over the entire class $\Q$
from which the training instances in $\Q'$ have been drawn.
While correct generalization is not guaranteed, this can be established
by manually analyzing $\pi^*$.

The key constraints in the SAT theory, expressed with the $good(s,t)$ and $select(f)$
atoms, ensure that any resulting policy $\pi$ is \textbf{closed}, \textbf{safe},
and \textbf{acyclic} in $\Q'$.
For this, the \emph{full state space} of such instances, states and transitions,
need to be calculated and represented in $\Th(\T,\F)$ as well as all the
features in the pool, making the approach only feasible for small state spaces
and feature pools.

\subsection{Termination}
\label{sect:background:learning:termination}

The idea of structural termination in settings where variables can be increased and decreased
``qualitatively'' (i.e., by random amounts) is that certain state trajectories are not possible
if the variables have minimal and maximal values in each instance, and the changes cannot
be infinitesimally small \cite{sid:aaai2011,bonet:qnps}.
In the generalized planning setting, it is the numerical features that change in this way.
An infinite state trajectory where a numerical feature is increased \emph{finitely} often,
and decreased \emph{infinitely} often, or vice versa,
is not possible.
If the infinite state trajectories generated by a policy all have this form,
the policy must be \emph{acyclic.}

This notion of termination is particularly interesting because it comes up with a sound
and complete algorithm for checking the termination of a given policy $\pi$,
called \textsc{sieve} \cite{sid:aaai2011} that runs in time $\O(2^n)$ where
$n$ is the number of features in $\pi$, and which works on the so-called
\emph{policy graph,} whose nodes are the possible Boolean valuations of the features in $\pi$.
Indeed, while acyclicity is property of the policy $\pi$ in a particular instance $P$,
termination is a property of the rules in $\pi$ that ensures acyclicity
for \emph{any} instance. The notion of \emph{stratified rule-based policies}
introduced below provides a novel twist to this idea which is more convenient
for learning policies that are \textbf{terminating by design.} Indeed, the new
structural termination criterion runs more efficiently than \textsc{sieve}
and can be compiled into the procedure that selects the policy features
given the good transitions, ensuring that the resulting policy
is terminating.

\Omit{
  By Theorem~\ref{thm:solutions:Q}, a rule-based policy $\pi$ solves $\Q$
  iff it is closed, safe and acyclic in $\Q$. Of these three properties,
  acyclicity is perhaps the most difficult/costly to achieve for
  learning approaches. Indeed, as it is shown below, given a method
  that always enforces acyclicity, one can systematically repair a given
  acyclic policy that is not closed or safe, by just including or
  excluding transitions from $\pi$.

  Acyclicity is guarantee, over any instance $P$ where the features in $\pi$
  can be evaluated, when the policy $\pi$ is \textbf{(structurally) terminating} \cite{refs}.
  Termination is a property established by the rules that conform the
  policy, and how features involved change.

  For example, if $\pi$ only contains the rules \prule{\neg p}{p} and \prule{p,\GT{n}}{\DEC{n},\neg p},
  the policy $\pi$ is \textbf{terminating,} independently of the interpretation
  of the features $p$ and $n$. This means that $\pi$ cannot generate a cycle
  on the state space on any instance $P$, for any domain $D$, where it is applied.
  Briefly, for a proof by contradiction, let us suppose that $\pi$ generates
  an infinite state trajectory $\tau\,{=}\,\tup{s_0,s_1,s_2,\ldots}$.
  Then, it must be the case that the two rules are applied infinitely often
  as each rule flips the value of the Boolean $p$. On the other hand, the
  second rule always decreases the non-negative integer-valued feature $n$,
  while the first does not increase it. So, eventually, $n$ reaches zero,
  from which the second nor the first rule can be further applied.

  Formally, a policy $\pi$ is terminating iff every Boolean trajectory
  compatible with $\pi$ is terminating.
  A Boolean trajectory is a sequence $\tup{\nu_0,\nu_1,\nu_2,\ldots}$ of
  Boolean valuations for the features in $\pi$, seeded at an initial Boolean
  valuation $\nu_0$, such that each transition $(\nu_i,\nu_{i+1})$ is compatible
  with some rule in $\pi$.
  Such a trajectory $\tau$ is terminating when it is \textbf{finite,} or
  there is a numerical feature $n$ that is decremented infinitely often
  and incremented a finite number of times. The feature $n$ depends on
  $\tau$, meaning that for different trajectories, different features
  may apply.

  \sieve is a sound and complete algorithm that checks whether a rule-based
  policy $\pi$ is terminating \cite{sieve}.
  \sieve works on the \textbf{policy graph} $G_\pi$ for $\pi$ which is a
  rooted directed graph whose vertices are the Boolean valuations of the
  features in $\pi$, root given by the initial Boolean valuation $\nu_0$,
  and there an edge $(\nu,\nu')$ between two such Boolean valuations
  if there is a rule in $\pi$ that may map $\nu$ into $\nu'$.
  \sieve repeatedly computes the \textbf{strongly connected components} of
  $G_\pi$ and removes edges from $G_\pi$ that satisfy certain conditions,
  until the graph is rendered acyclic, or no more edges can be removed.
  Then, $\pi$ is terminating iff $G_\pi$ can be reduced to an acyclic
  graph by \sieve.
  The time and space complexities for \sieve are exponential in the
  number of features in $\pi$ as $\G_\pi$ must be constructed.
}

\section{The Plan}

The approach for learning general policies via SAT reviewed above 
is simple and elegant but does not scale up. 
The new learning method 
can be thought as a simplification where: 

\begin{enumerate}[1.]
  \item The ``good'' state transitions are not selected by the SAT solver but incrementally, by a \emph{planner}.
  \item The acyclicity constraint, which is global and hard to enforce, is replaced by a \emph{new termination criterion} that 
    is enforced implicitly in the selection of the features.
  \item The selection of the features is carried out by a scalable \emph{hitting set algorithm} and not by SAT.
\end{enumerate}

The three elements are all critical for the performance of the resulting learning algorithm.
The second and third elements are addressed by an algorithm that we call \genex, for \emph{gen}eralization from \emph{ex}amples,
and that can handle sets of state transitions and features that are orders of magnitude larger
that the ones handled by current approaches.
The first element, on the other hand, that uses a planner to select transitions, is addressed
by a simple \wrapper algorithm around \genex.

An essential idea of the new method is the \emph{decoupling} of the two selections involved in the computation of general policies:
the selection of the ``good'' transitions in the training set, and the selection of the features.
In the SAT approach these two decisions are coupled, ensuring completeness: if there is a general policy over the
features that solves the training instances, the SAT approach would find it.
This guarantee is now gone, replaced by the decoupling that enables scalability.
The experiments below evaluate this trade off.

The next sections introduce the new termination criterion, a new basic learning algorithm
that yields terminating policies that generalizes given sets of ``good'' and ``bad'' transitions,
and a wrapping mechanism that extends this policy to be safe and closed. 

\section{Termination Revisited}

\Omit{
  A policy $\pi$ is terminating iff for each infinite
  $\pi$-trajectory, there is a feature $f$ that
  decreases infinitely often in $\tau$, but only increases a finite number of times
  in $\tau$. The existence of $f$ for a given trajectory does not depend on the
  behavior of other features in the trajectory. As it turns out, this is perhaps
  the most general notion of termination available for general feature-based
  policies.
}

A new notion of termination is introduced by means of \textbf{stratified policies}.
These are \textbf{rule-based policies} whose features can be layered up in such a way,
that {features} in the first layer can be shown to be terminating without having to
consider other features, while features in successive layers can be shown to be
terminating given features in previous layers. A \textbf{feature $f$ is terminating}
in a policy $\pi$ if it cannot keep changing values forever; namely, if the number of
times that $f$ changes value in a $\pi$-trajectory is finite. The policy $\pi$
is terminating if all the features involved are terminating.

%

We take advantage of the assumption that the numerical features
are non-negative integer valued and upper bounded in any instance. 
Hence, a trajectory where
a feature is increased (resp.\ decreased) infinitely often but decreased (resp.\ increased)
finitely often is not possible.
In the definitions, Boolean features are treated as numerical features
with decrements and increments referring to value changes from from 1 to 0, and from 0 to 1,
respectively.

\subsection{Stratified Policies}
\label{subsection:stratified}

The first class of terminating features are the features that are \textbf{monotone}
in the set of (policy) rules $R$, meaning that $R$ does not contain rules that
increase and that decrease $f$.


\begin{definition}[Monotone Features]
  \label{def:monotone}
  Let $\F$ be a set of features, let $R$ be a set of policy rules over $\F$,
  and let $f$ be a feature that appears in some rule in $R$.
  Then, $f$ is \textbf{monotone in $R$} iff either there is no rule in $R$
  that increases $f$, or there is no rule in $R$ that decreases $f$.
\end{definition}

Clearly, a monotone feature can only change value a finite number
of times along a given trajectory, as it eventually reaches a
minimum or maximum value, and stays put.

The second class of terminating features $f$ are those that are 
rendered monotone by other monotone features $g$. Indeed, since $g$ can
change a finite number of times, $f$ will be \textbf{monotone given $g$}
if $f$ is monotone over the rules that do not change the value of $g$
and which share the same value of $g$ in the conditions.

\Omit{
If $\tau$ is an $R$-trajectory (finite or infinite), the \textbf{Boolean value}
of $f$ along $\tau$ cannot flip an infinite number of times. It can only flip at most
once, from $0$ to $1$, or from $1$ to $0$.
}

To capture this form of \textbf{conditional monotonicity}, we define the following
rule subsets from $R$ and a given feature $g$:
\begin{alignat*}{1}
  \varrho(R,g,=)\ &\doteq\ \set{r\,{\in}\,R \mid \set{\INC{g},\DEC{g}}\,{\cap}\,\eff(r)\,{=}\,\emptyset } \,, \\
  \varrho(R,g,0)\ &\doteq\ \set{r\,{\in}\,\varrho(R,g,=) \mid \text{`$\GT{g}$'}\,{\not\in}\, \cond(r)} \,, \\
  \varrho(R,g,1)\ &\doteq\ \set{r\,{\in}\,\varrho(R,g,=) \mid \text{`$\EQ{g}$'}\,{\not\in}\, \cond(r)} \,.
\end{alignat*}

The intuitions for these subsets is that when a transition $(s,t)$ is compatible with a rule $r$, then

\begin{enumerate}[$\bullet$]
  \item $r\,{\in}\,\varrho(R,g,=)$ iff $g$ may remain unchanged across $(s,t)$,
  \item $r\,{\in}\,\varrho(R,g,0)$ iff $r\,{\in}\,\varrho(R,g,=)$ and $\EQ{g}$ may hold at $s$,
  \item $r\,{\in}\,\varrho(R,g,1)$ iff $r\,{\in}\,\varrho(R,g,=)$ and $\GT{g}$ may hold at $s$.
\end{enumerate}

Conditional monotonicity is defined as follows:

\begin{definition}[Conditional Monotonicity]
  \label{def:monotone:conditional}
  Let $\F$ be a set of features, let $R$ be a set of policy rules over $\F$,
  and let $f$ and $g$ be features that are mentioned in $R$.
  Then, $f$ is \textbf{monotone in $R$ given $g$} iff $f$ is monotone in
  $\varrho(R,g,0)$ \textbf{and} $f$ is monotone in $\varrho(R,g,1)$.
\end{definition}

A set of rules $R$ encodes a \textbf{stratified policy} if the features
in the policy can be ordered in such a way that features $f$ with a
positive rank $\kappa(f)$ are monotone given features $g$ of lower rank.


\begin{definition}[Stratified Policies]
  \label{def:stratified}
  Let $\pi$ be a rule-based policy over a set $\F$ of features.
  Then, $\pi$ is a \textbf{stratified} iff 
  \begin{enumerate}[1.]
    \item Each rule in $\pi$ \textbf{entails the change} of some feature $f$, and
    \item There is a ranking $\kappa$ for the features in $\pi$ such that
      \begin{enumerate}[{2}a.]
        \item If $\kappa(f)\,{=}\,0$, then $f$ is \textbf{monotone in $\pi$,} and
        \item If $\kappa(f)\,{>}\,0$, then there is feature $g$ such that $\kappa(g)\,{<}\,\kappa(f)$ and $f$ is \textbf{monotone in $\pi$ given $g$.}
      \end{enumerate}
  \end{enumerate}
  Where a rule \arule{C}{E} \textbf{entails the change} of feature $f$ iff
  $f$ is numerical and $E\cap\set{\INC{f},\DEC{f}}\neq\emptyset$, or $f$ is Boolean
  and either $p\,{\in}\,C \land \neg p\,{\in}\,E$, or $\neg p\,{\in}\,C \land p\,{\in}\,E$.
\end{definition}

A stratified policy cannot generate infinite trajectories just
due to its form, without regard for the class of instances $\Q$
where it is applied, or the interpretation of the features; i.e.,

\begin{theorem}[Termination]
  \label{thm:termination1}
  Let $\pi$ be a rule-based policy over a set of features $\F$.
  If $\pi$ is stratified, $\pi$ is terminating.
\end{theorem}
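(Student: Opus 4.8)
The plan is to prove termination by contradiction: assume $\pi$ is stratified but not terminating, so some $\pi$-trajectory $\tau = \tup{s_0, s_1, s_2, \ldots}$ has some feature that changes value infinitely often along $\tau$. The natural approach is to induct on the rank $\kappa$ and show that a feature of rank $k$ cannot change infinitely often, given that no feature of rank $<k$ does. First I would set up the key observation that, since each transition $(s_i, s_{i+1})$ in $\tau$ is compatible with some rule $r_i$ in $\pi$, and (by clause~1 of Definition~\ref{def:stratified}) each such rule entails the change of some feature, there is at least one feature that changes value at every step; hence if $\tau$ is infinite, by the pigeonhole principle at least one feature changes value infinitely often. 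Let $f$ be a feature that changes value infinitely often along $\tau$ and whose rank $\kappa(f)$ is minimal among such features.

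Next I would handle the base case $\kappa(f) = 0$. By clause~2a, $f$ is monotone in $\pi$, so every rule in $\pi$ either never increases $f$ or never decreases $f$; since every transition in $\tau$ is compatible with some rule in $\pi$, the value of $f$ is monotone (non-decreasing or non-increasing) along the whole of $\tau$. But $f$ takes values in the non-negative integers (Booleans being the special case of values in $\set{0,1}$) and, by the standing assumption, is upper bounded in the instance; a monotone bounded integer sequence is eventually constant, so $f$ changes only finitely often --- contradiction.

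For the inductive step $\kappa(f) > 0$, by clause~2b there is a feature $g$ with $\kappa(g) < \kappa(f)$ such that $f$ is monotone in $\pi$ given $g$, i.e. $f$ is monotone in both $\varrho(\pi,g,0)$ and $\varrho(\pi,g,1)$. Since $\kappa(g) < \kappa(f)$ and $f$ has minimal rank among features changing infinitely often, $g$ changes only finitely often along $\tau$; pick an index $N$ beyond which $g$ is constant, with value either $0$ or positive on $\tup{s_N, s_{N+1}, \ldots}$. Here I would invoke the intended semantics of the $\varrho$ subsets: every transition $(s_i, s_{i+1})$ with $i \ge N$ is compatible with some rule $r_i$, and since $g$ does not change across that transition, $r_i \in \varrho(\pi,g,=)$; moreover, if $g$ is fixed at $0$ on the tail then $\EQ{g}$ holds at $s_i$ so $r_i$ cannot contain `$\GT{g}$' in its condition (for otherwise the rule would not be compatible), hence $r_i \in \varrho(\pi,g,0)$, and symmetrically $r_i \in \varrho(\pi,g,1)$ when $g$ is fixed at a positive value. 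Either way, the whole tail $\tup{s_N, s_{N+1}, \ldots}$ is generated by rules in a set in which $f$ is monotone, so by the same bounded-monotone-sequence argument as in the base case, $f$ changes only finitely often on the tail, hence only finitely often along $\tau$ --- contradiction. This completes the induction and the proof.

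The main obstacle I anticipate is making the link between the \emph{syntactic} definitions of $\varrho(R,g,0)$ and $\varrho(R,g,1)$ (phrased in terms of which conditions appear in a rule) and the \emph{semantic} fact one actually needs (that a transition on which $g$ is fixed at a particular Boolean value, and which is compatible with some policy rule, is compatible with a rule in the appropriate $\varrho$ set) --- this requires carefully unpacking the compatibility conditions from Section~\ref{sect:background:policies}, in particular that an active condition `$\GT{g}$' forces $\GT{g(s_i)}$ and an active effect $\INC{g}/\DEC{g}$ forces $g$ to change, so that the contrapositive gives membership in $\varrho(\pi,g,=)$ and in the right one of $\varrho(\pi,g,0)$, $\varrho(\pi,g,1)$. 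A secondary point worth stating cleanly is that "changes value infinitely often" together with monotonicity over a (tail) set of rules is genuinely incompatible with the value being a bounded non-negative integer --- this is where clauses~2a/2b actually bite, and it should be isolated as a small lemma or remark so the two cases can reuse it.
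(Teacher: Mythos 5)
Your proposal is correct and follows essentially the same route as the paper's own argument, which is given for the more general $k$-stratified case (Theorem~\ref{thm:termination}) and specializes to this statement at $k=1$: both pick a feature $f$ of minimal rank among those changing infinitely often, dispose of rank $0$ via monotonicity plus the bounded non-negative integer range, and for positive rank use that the lower-rank conditioning feature must eventually stabilize, so the tail of the trajectory is generated by rules in a single $\varrho(\pi,g,\cdot)$ set in which $f$ is monotone. The paper phrases the final step contrapositively (two valuations of $G$ recur infinitely often, so some $g\in G$ flips infinitely often, contradicting minimality), but this is the same argument viewed from the other end, and your explicit unpacking of why a compatible transition with $g$ fixed lands in the right $\varrho$ set is a point the paper leaves implicit.
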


\Omit{
  This concludes the definition of (simple) stratified policies where the monotonicity
  of each feature $f$ may depend at most on another feature $g$ of lower rank.
  As it is shown below, if the policy $\pi$ is stratified, then it is terminating.
}

\begin{example*}
  We illustrate a terminating policy for the Gripper domain which
  involves a robot that must move balls from room $A$ to room $B$,
  using two grippers.
  The rooms $A$ and $B$ are declared in the domain specification
  as \textbf{constants,} thus shared by all instances and identified
  with \emph{nominal concepts.}
  The policy $\pi$ is defined over the features:
  $A$ that tells whether the robot is in room $A$,
  $m$ that counts the number of balls being held, and
  $n$ that counts the number of objects in room $A$.
  The rules are:
  \begin{alignat*}{2}
    r_1\ :\ && \set{\GT{n}}\         &\mapsto\ \set{\DEC{n}, \UNK{m}}\,, \\
    r_2\ :\ && \set{\GT{m}}\         &\mapsto\ \set{\DEC{m}}\,, \\
    r_3:\ \ && \set{A, \GT{m}}\      &\mapsto\ \set{\neg A}\,, \\
    r_4:\ \ && \set{\neg A, \EQ{m}}\ &\mapsto\ \set{A}\,.
  \end{alignat*}
  The first rule says to decrease the number of balls in room $A$
  (i.e., to pick them up) whenever possible, perhaps affecting $m$;
  the second to drop balls somewhere while not affecting $n$ (the only
  way to achieve this is to drop balls in room $B$);
  the third rule says to move from room $A$ to $B$ when holding a ball,
  and the last one to move from $B$ to $A$ when holding nothing.
  This is a general policy that moves all balls in room A to room B,
  and that works for any number of balls.

  The policy is stratified and thus terminating (cf.\ Def.~\ref{def:stratified} and Thm.~\ref{thm:termination1}).
  The feature $n$ is monotone in $\pi$ as no rule increases it,
  $m$ is monotone in $\pi$ \textbf{given} $n$ as it is monotone in $\varrho(\pi,n,0)\,{=}\,\varrho(\pi,n,1)\,{=}\,\set{r_2,r_3,r_4}$,
  and $A$ is monotone in $\pi$ \textbf{given} $m$ as it is monotone in $\varrho(\pi,m,0)\,{=}\,\set{r_1,r_4}$ and it is monotone in $\varrho(\pi,m,1)\,{=}\,\set{r_1,r_4}$.
  \qed
\end{example*}

\subsection{$k$-Stratified Policies}

Stratified policies can be generalized to cases where the monotonicity of $f$
depends on multiple features of lower rank.
For this, we need to define when a feature $f$
is monotone given a \textbf{set} $G$ of features, and this requires
the consideration of Boolean valuations over the features. Recall
that a Boolean valuation assigns a Boolean value $\nu(g)$ in $\set{0,1}$
for every feature $g$, whether Boolean or numerical.

\begin{definition}[$k$-Conditional Monotonicity]
  \label{def:monotone:conditional:k}
  Let $\F$ be a set of features, let $R$ be a set of policy rules over $\F$,
  and let $f$ and $G$ be a feature and a set of features, respectively, mentioned in $R$.
  Then, $f$ is \textbf{monotone in $R$ given $G$} iff for each Boolean
  valuation $\nu$ for $G$, $f$ is monotone in $\varrho(R,G,\nu)$, where
  $\varrho(R,G,\nu) \doteq \cap \set{ \varrho(R,g,\nu(g)) \,{\mid}\, g\,{\in}\,G }$.
\end{definition}

As $G$ increases, more ``contexts'' $\varrho(R,G,\nu)$ need to be considered, but
each context is smaller. Thus, the chances for $f$ being monotone given $G$ increase
as $G$ contains more features.
Also, $f$ is monotone given $G'$ when it is monotone given $G$ and $G\,{\subseteq}\,G'$.
The definition of $k$-stratified policies can be similarly expressed as before: 

\begin{definition}[$k$-Stratified Policies]
  \label{def:stratified:k}
  Let $\pi$ be a rule-based policy defined over the features in $\F$, and let $k$ be a positive integer.
  Then, $\pi$ is a \textbf{$k$-stratified policy} iff
  \begin{enumerate}[1.]
    \item Each rule in $\pi$ \textbf{entails the change} of some feature $f$, and
    \item There is ranking $\kappa$ for the features in $\pi$ such that
      \begin{enumerate}[{2}a.]
        \item If $\kappa(f)\,{=}\,0$, $f$ is \textbf{monotone in $\pi$,} and
        \item If $\kappa(f)\,{>}\,0$, there are features $G\,{=}\,\set{g_1,g_2,\ldots,g_\ell}$, with $\ell\,{\leq}\,k$,
          such that $\max\set{\kappa(g)\,{\mid}\,g\,{\in}\,G}\,{<}\,\kappa(f)$ and $f$ is \textbf{monotone in $\pi$ given $G$}.
      \end{enumerate}
  \end{enumerate}
\end{definition}

\Omit{
  The class of all rule-based policies $\pi$ over the features in $\F$
  is denoted by $\Pi_\F$, and the class of all policies that are $k$-stratified
  is denoted by $\Pi^k_\F$.
}
It is not difficult to show that for any integer $k\,{>}\,1$,
there are policies that are $k$-stratified but not $(k-1)$-stratified,
and that there are policies that are terminating, but not $k$-stratified
for any $k$.
As before, $k$-stratified policies are terminating too:

\begin{theorem}[Termination of $k$-Stratified Policies]
  \label{thm:termination}
  Let $\pi$ be a rule-based policy defined on a set of features $\F$,
  and let $k$ be a positive integer.
  If $\pi$ is $k$-stratified, $\pi$ is terminating.
\end{theorem}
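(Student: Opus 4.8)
The plan is to prove this by strong induction on the rank $\kappa$, mirroring the structure of the (presumably already proven) Theorem~\ref{thm:termination1} for $1$-stratified policies, but carrying the bookkeeping over Boolean valuations of a set $G$ rather than a single feature $g$. Suppose for contradiction that $\pi$ is $k$-stratified but not terminating, so there is an infinite $\pi$-trajectory $\tau = \tup{s_0, s_1, s_2, \ldots}$ in some instance $P$. Since the feature pool mentioned in $\pi$ is finite and each numerical feature is bounded in $P$, only finitely many distinct Boolean valuations of the features in $\pi$ occur along $\tau$; hence (by König/pigeonhole) there is a tail $\tau'$ of $\tau$ along which every feature that changes at all changes infinitely often, and every feature that is eventually constant has already reached its final value. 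Let $X$ be the (nonempty, by condition~1 of Def.~\ref{def:stratified:k}, since each applied rule entails some change and infinitely many rules are applied) set of features that change infinitely often along $\tau'$. I will derive a contradiction by looking at a feature of \emph{minimum rank} in $X$.

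The key step is the following claim: the minimum-rank feature $f^\ast \in X$ cannot in fact change infinitely often. If $\kappa(f^\ast) = 0$, then by condition 2a $f^\ast$ is monotone in $\pi$, so along $\tau'$ it is eventually non-decreasing (or eventually non-increasing); being integer-valued and bounded, it stabilizes, contradicting $f^\ast \in X$. If $\kappa(f^\ast) > 0$, condition 2b gives a set $G = \set{g_1, \ldots, g_\ell}$, $\ell \le k$, with $\max\set{\kappa(g) \mid g \in G} < \kappa(f^\ast)$ and $f^\ast$ monotone in $\pi$ given $G$. By minimality of $\kappa(f^\ast)$ in $X$, none of the $g \in G$ lies in $X$, so each $g \in G$ is eventually constant along $\tau'$; pick a tail $\tau''$ of $\tau'$ on which every $g \in G$ holds a fixed value, and let $\nu$ be the induced Boolean valuation on $G$. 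Now I need to argue that every transition in $\tau''$ is compatible with a rule lying in $\varrho(\pi, G, \nu) = \cap\set{\varrho(\pi, g, \nu(g)) \mid g \in G}$: this is exactly where the three intuitions listed after the definition of $\varrho(R,g,\cdot)$ are used — a transition that does not change $g$ and is compatible with a rule $r$ forces $r \in \varrho(\pi,g,=)$, and if moreover the fixed value of $g$ is $0$ (resp.\ positive) then $r \in \varrho(\pi,g,0)$ (resp.\ $\varrho(\pi,g,1)$), because a rule carrying the opposite guard `$\GT{g}$' (resp.\ `$\EQ{g}$') could not have fired at that state. Intersecting over all $g \in G$ gives membership in $\varrho(\pi, G, \nu)$. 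Since $f^\ast$ is monotone in $\varrho(\pi, G, \nu)$, the same bounded-monotone argument as in the rank-$0$ case shows $f^\ast$ stabilizes along $\tau''$, contradicting $f^\ast \in X$.

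I expect the main obstacle to be the careful justification of the compatibility step — that along the tail $\tau''$ the trajectory is genuinely driven by rules in $\varrho(\pi,G,\nu)$ — since this requires being precise about the difference between ``the rule may leave $g$ unchanged'' and ``$g$ is actually unchanged in this transition'', and about how a rule's condition on $g$ (a guard `$\EQ{g}$' or `$\GT{g}$', or no guard) interacts with the actual value of $g$ at the source state; this is the content of the bulleted intuitions and should be stated as a small lemma. The remaining ingredients — existence of a stabilizing tail, finiteness of the Boolean valuation space, the fact that a bounded monotone integer sequence converges, and nonemptiness of $X$ via condition~1 — are routine. A concluding remark: taking $k = 1$ recovers Theorem~\ref{thm:termination1}, and the proof for that case is the special instance where $G$ is a singleton and the intersection in $\varrho(\pi,G,\nu)$ collapses to a single $\varrho(\pi,g,\nu(g))$.
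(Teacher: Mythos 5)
Your proof is correct and follows essentially the same route as the paper's: a contradiction argument built around a feature of \emph{minimum rank} that changes infinitely often, using condition~1 to get such a feature and conditional monotonicity over $\varrho(\pi,G,\nu)$ to push the contradiction down to a lower-rank feature. The only (cosmetic) difference is that you use minimality first to freeze the valuation of $G$ on a tail and then invoke monotonicity in a single context, whereas the paper argues the contrapositive — that two valuations of $G$ must recur infinitely often, so some $g\in G$ flips infinitely often — and your explicit compatibility lemma fills in a step the paper's sketch leaves implicit.
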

\begin{proof}[Proof (sketch).]
  Let $\pi$ be a $k$-stratified policy, and let $\kappa$ be a suitable ranking for $\pi$.
  Let us suppose that $\pi$ is not terminating; i.e., there is an infinite trajectory
  of Boolean valuations over the features that is compatible with $\pi$.

  Since for each transition compatible with $\pi$, there is a feature that changes
  across the transition, there must be a feature $f$ in $\pi$ that is increased \textbf{and}
  decreased infinitely often in $\tau$.
  %
  Let $f$ be such a feature of \textbf{minimum rank.}
  Clearly, $\kappa(f)$ cannot be zero as such features are monotone in $\pi$.
  Therefore, there is a set $G$ with at most $k$ features and $\max\set{\kappa(g)\,{\mid}\,g\in G}\,{<}\,\kappa(f)$
  such that $f$ is monotone in $\pi$ given $G$.
  This means that $f$ is monotone in $\varrho(\pi,G,\nu)$ for all the Boolean valuations $\nu$ of the features in $G$.
  Therefore, there are at least two valuations $\nu_0$ and $\nu_1$ for $G$ that appear infinitely often in $\tau$.
  Hence, there is some feature $g$ in $G$ whose Boolean value \textbf{flips} infinitely often in $\tau$, which
  implies that $g$ is increased \textbf{and} decreased infinitely often in $\tau$.
  This contradicts the choice of $f$ because $\kappa(g)\,{<}\,\kappa(f)$.
\end{proof}

At the same time, checking $k$-stratification is exponential in $k$ and not exponential
in the number of features like \textsc{sieve}. This is important because the notion of termination
captured by $k$-stratification for a low value of $k$ is most often powerful enough.
Indeed, our learning algorithm uses $k\,{=}\,1$.

\begin{theorem}[Testing $k$-Stratification]
  \label{thm:test}
  Let $\pi$ be rule-based policy defined on a set of features $\F$,
  and let $k$ be a positive integer.
  Testing whether $\pi$ is $k$-stratified can be done in time that
  is exponential in $k$, but polynomial in $|\pi|$ and $|\F|$,
  where $|\pi|$ refers to the number of rules in $\pi$.
\end{theorem}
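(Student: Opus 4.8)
The plan is to avoid searching over rankings altogether and instead build a witnessing ranking $\kappa$ bottom-up by a monotone ``closure'' procedure, and then to do the complexity bookkeeping. First, condition~1 of Definition~\ref{def:stratified:k} is local: scan the rules of $\pi$ and check for each \arule{C}{E} whether it entails the change of some feature, in time $\O(|\pi|\cdot|\F|)$. Next, for every feature $g$ mentioned in $\pi$, precompute (treating Booleans as numericals, as in the definitions) the set of rules that increase $g$ and the set that decrease $g$; from these one reads off $\varrho(\pi,g,{=})$, $\varrho(\pi,g,0)$, $\varrho(\pi,g,1)$ and, for any rule subset $\varrho$ and any feature $h$, the predicate ``$h$ is monotone in $\varrho$'', each by an $\O(|\pi|)$ scan.

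The core is the following loop. Let $S_0$ be the set of features monotone in $\pi$, with rank $0$. Given $S_{i-1}$, let $S_i$ add to $S_{i-1}$ every feature $f\notin S_{i-1}$ for which some $G\subseteq S_{i-1}$ with $|G|\le k$ makes $f$ monotone in $\pi$ given $G$ (Definition~\ref{def:monotone:conditional:k}), assigning rank $i$ to the new features; stop when $S_i=S_{i-1}$. Report that $\pi$ is $k$-stratified iff condition~1 holds and the final set contains all features mentioned in $\pi$. Correctness has two halves. For soundness, if the loop exhausts all features, the computed rank function is a valid ranking: rank-$0$ features are monotone in $\pi$ by construction, and a feature $f$ added at round $i>0$ has a witness $G\subseteq S_{i-1}$ whose members all have rank $\le i-1<i$, so $\max\{\kappa(g)\mid g\in G\}<\kappa(f)$, as clause~2b demands. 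For completeness, if $\pi$ is $k$-stratified via some ranking $\kappa'$, then by induction on $\kappa'(f)$ every feature $f$ is eventually admitted: rank-$0$ features enter $S_0$; if $\kappa'(f)>0$, its witness set has $\kappa'$-rank strictly below $\kappa'(f)$, hence by the induction hypothesis it lies in some $S_j$, and $f$ is admitted at the next round. Thus the loop succeeds exactly when a valid ranking exists. One may additionally invoke the monotonicity of ``monotone given $G$'' under enlarging $G$ (noted after Definition~\ref{def:monotone:conditional:k}) to restrict attention to witnesses of size exactly $\min(k,|S_{i-1}|)$, but this does not change the bound.

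For the running time there are at most $|\F|$ rounds, since a non-final round adds at least one feature. Within a round, for each of $\le|\F|$ unranked features we enumerate the $\O(|\F|^k)$ candidate sets $G\subseteq S_{i-1}$ with $|G|\le k$; for each $G$ we range over its $\le 2^k$ Boolean valuations $\nu$, form $\varrho(\pi,G,\nu)=\bigcap_{g\in G}\varrho(\pi,g,\nu(g))$ in time $\O(k|\pi|)$, and test monotonicity of $f$ in it in time $\O(|\pi|)$. This yields a bound of the shape $\O\!\left(2^{k}\,k\,|\F|^{k+2}\,|\pi|\right)$, which is exponential in $k$ and, for each fixed $k$, polynomial in $|\pi|$ and $|\F|$.

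The one point that needs care — the ``main obstacle'' — is the completeness/soundness argument, i.e.\ justifying that this purely local, monotone closure is enough and that no backtracking over rankings is required. This works precisely because clause~2b refers only to features of strictly lower rank, which are exactly the features already admitted by the procedure, and because admitting more features can only enlarge the pool of available witnesses; everything else is routine bookkeeping.
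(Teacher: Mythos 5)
Your proposal is correct and follows essentially the same route as the paper's own proof: a staged, bottom-up construction of the ranking that first assigns rank 0 to the monotone features and then, at each stage, admits any unranked feature $f$ witnessed by some set $G$ of at most $k$ already-ranked features, enumerating the $\O(|\F|^k)$ candidate sets and the $\O(2^k)$ contexts per set, over $\O(|\F|)$ stages. Your writeup is somewhat more careful than the paper's sketch in that it explicitly argues soundness and completeness of the monotone closure (i.e., that no backtracking over rankings is needed), but the algorithm and the complexity accounting are the same.
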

\begin{proof}[Proof (sketch).]
  In order to check that $\pi$ is $k$-stratified, one needs to ``construct'' a suitable ranking $\kappa$.
  The construction proceeds in stages, first identifying the features of rank 0, then those of rank 1, etc.
  First, at stage 0, the monotone features $f$ in $\pi$ are identified in linear time and assigned $\kappa(f)\,{=}\,0$.
  Then, at stage $\ell$, each feature $f$ not yet ranked is tested whether there is a subset $G$ of at most $k$
  features of rank less than $\ell$ such that $f$ is monotone in $\pi$ given $G$.
  The number of subsets to try is $\O(n^k)$, where $n$ is the number of features in $\pi$, and for each such
  candidate, $\O(2^k)$ contexts must be considered.
  Hence, testing whether a new feature $f$ can be assigned a rank requires time that is exponential only in $k$.
  This check must be repeated $\O(n)$ times as there are $n$ features, and for at most $\O(n)$ stages.
\end{proof}

\Omit{
  \subsection{Main Properties of Stratified Policies}
  We finish this section by establishing two important properties of
  stratified policies.
  First, such policies are terminating. Second, testing whether
  a given policy $\pi$ is $k$-stratified can be done in time that
  is \textbf{exponential only in $k$.}
  Hence, testing whether a given policy $\pi$ is $k$-stratified,
  for fixed $k$, can be done in polynomial time.
  This is a sharp difference with the general notion of termination
  where testing a policy for termination requires space (and time)
  that is exponential in the number of features mentioned in $\pi$,
  as the policy graph for $\pi$ must be constructed \cite{refs}.
}

\section{Basic Learning Task and \textsc{Genex}}
\label{sect:learning}

We formulate the problem of learning generalized policies from examples (plans)
in two parts:

\begin{enumerate}[1.]
  \item \textbf{Basic learning task (BLT):} Given sets $\X^+$ and $\X^-$ of ``good'' and
    ``bad'' state transitions, respectively, and a feature pool $\F$, the task is to
    find a \textbf{stratified} policy $\pi$ over the features in $\F$ such that the good
    transitions are $\pi$-transitions, and no bad transition is a $\pi$-transition.
    In other words, the BLT task is about generalizing the good transitions, which may
    come from plans, while avoiding the bad transitions and ensuring termination.
  \item \textbf{Meta learning task (MLT):} Find the sets of good and bad transitions so
    that the BLT returns a policy that is \textbf{closed} and \textbf{safe} over all
    instances $P$ in a training set $\Q$.
\end{enumerate}

We focus on the basic learning task in this section and on the
meta learning task in the next one.

\Omit{
  from $\Q$ by iteratively
  following two steps:
  \begin{enumerate}[1.]
    \item Obtain a terminating (stratified) policy that generalizes a given set of
      positive and negative state transitions; provided by a planner/oracle (where
      positive transitions are those in the examples, and negative transitions are
      those leading to dead-end states).
    \item If the resulting policy is not closed in $\Q$, or new dead-end states
      are found (i.e., states that are not solvable by the planner/oracle), then
      extend the positive transitions with the planner, or the negative transitions
      with the discovered transitions to dead-end states, and go back to 1.
  \end{enumerate}
  The first step is achieved by learners for the \emph{basic learning task,} while
  the second step is achieved by a suitable \emph{wrapping} of such learners.

  Generalization beyond the instances in $\Q$ is obtained by the strong \textbf{inductive bias}
  imposed by the target class of policies, rule-based stratified policies, as such
  policies are not tied to the instances in $\Q$, and are terminating.
}

\subsection{Task and Algorithm}

The task is to learn a stratified policy that includes the good transitions,
and excludes the bad transitions; formally,

\begin{definition}[Basic Learning Task]
  \label{def:learning:basic}
  Let $\F$ be a pool of features, and let $\X^+$ and $\X^-$ be sets of transitions,
  called ``good'' and ``bad'' transitions, respectively.
  Then, $BLT(\F,\X^+,\X^-)$ is the task of finding a
  \textbf{stratified policy} $\pi$ over $\F$ such that $\X^+\subseteq\pi$ and $\X^-\cap\pi = \emptyset$.
\end{definition}

The BLT is defined in this way, leaving aside \textbf{closedness} and \textbf{safeness},
because it can be cast as a \textbf{hitting set problem} that admits efficient algorithms.

In general, a hitting set problem is the tuple $\tup{S,\H,c}$ where $S$ is a set of items,
$\H$ is a collection of $S$-subsets, and $c:S\rightarrow\mathbb{N}^+$ is a cost function.
The task is to find a min-cost subset $S'\,{\subseteq}\,S$ that ``hits'' every subset in $\H$
(i.e., $S'\cap S_i\neq\emptyset$ for $S_i\,{\in}\,\H$), where the cost of $S'$ is $c(S')\doteq \sum_{i\in S'} c(i)$.

\begin{definition}[Hitting Set Problem Induced by BLT]
  \label{def:hitting-set}
  Let $\F$ be a pool of features, and let $\X^+$ and $\X^-$ be sets of good and bad transitions.
  The hitting set problem $H(\F,\X^+,\X^-)$ is the tuple $\tup{\F,\H,c}$ where
  the cost function $c$ maps $f\,{\in}\,\F$ into its complexity, and $\H$
  consists of the following $\F$-subsets:
  \begin{enumerate}[$\bullet$]
    \item For each transition $(s,t)$ in $\X^+$, $\H$ contains the set
      $\set{f\,{\in}\,\F \,{\mid}\, \text{$f$ changes across $(s,t)$}}$.
    \item For each $(s,t)$ in $\X^-$ and each $(s',t')$ in $\X^+$, $\H$ contains
      $\set{f\,{\in}\,\F \,{\mid}\, \text{$f$ changes differently in $(s,t)$ and $(s',t')$}}$.
    \item For each pair $(s,s')$ of goal and non-goal states in the transitions in $\X^+$,
      $\H$ contains the set
      $\set{f\,{\in}\,\F \,{\mid}\, \text{the Boolean valuation of $f$ differs on $\set{s,s'}$}}$.
  \end{enumerate}
\end{definition}

The last type of subsets is not needed for solving the BLT.
However, policies that contain features that identify goal states tend to
generalize better over new unseen instances.

From a solution $\G$ for $H(\F,\X^+,\X^-)$, one can construct a policy
$\pi\,{=}\,\pi(\G,\X^+)$ whose rules \arule{C}{E} are obtained by \textbf{projecting}
the good transitions $(s,t)$ in $\X^+$ over the features in $\G$, like
in previous learning approaches.

\Omit{
  The other two condition in Definition~\ref{def:stratified} for stratified policies
  are satisfied when there is a ranking $\kappa$ for the features in $\G$ such that:
  \begin{enumerate}[R{1}$a$.]
    \item If $\kappa(f)\,{=}\,0$, then $f$ is monotone in $\X^+$, and
    \item If $\kappa(f)\,{=}\,1\,{+}\,\ell$, there is a feature $g$ in $\G$ such that
      $\kappa(g)\,{=}\,\ell$ and $f$ is monotone in $\X^+$ given $g$.
  \end{enumerate}
  Notice the abuse of notation as we refer to (conditional-)monotonicity for a set
  of transitions, and not policy rules, with the understanding that for establishing
  such notions the only relevant information is the value pf $f$, and possibly $g$,
  in the states in the transitions in $\X^+$.

  Conditions R1a and R1b are automatically met by Algorithm~\ref{alg:greedy} that
  solves the hitting problem $H$. It is an iterative greedy algorithm that approximates
  an optimal hitting set using the pricing technique of \cite{pricing}.
  At each iteration, a feature $f$ of \textbf{maximum score} is chosen to grow the
  incumbent set $\G$, until $\G$ becomes a hitting set for $H$, or no feature of non-zero
  score remains.
  Each feature $f$ is associated with an optimal chain $C_f=\tup{f_0,f_1,\ldots,f}$ such
  that $f_0$ is monotone in $\X^+$, and $f_{i+1}$ is monotone in $\X++$ given $f_i$.
  Growing $\G$ with $f$ then means adding all the features in $C_f$ to $\G$,
  and the score of $f$ given the current $\G$ equals the number of subsets hit by $C_f$
  but not $\G$ divided by the sum of the complexities of the features in $C_f$.
  In this way, at the start of each iteration in Alg.~\ref{alg:greedy}, there is
  a ranking $\kappa$ that satisfies R1a and R1b. Indeed,
}

\genex, depicted in Alg.~\ref{alg:greedy}, is a standard \textbf{greedy algorithm}
that solves $H(\F,\X^+,\X^-)$ by growing a hitting set $\G$. 
To guarantee completeness of the algorithm (cf.\ Theorem~\ref{thm:blt:1}),
the algorithm adds a \text{set} (chain) $C_f=\tup{f_0,f_1,\ldots,f_{k+1}\,{=}\,f}$
of features (of maximum score) that ends in $f$, and that provides complete
conditional monotonicity for $f$: $f_0$ is monotone for (the transitions in)
$\X^+$, and $f_{i+1}$ is monotone for $\X^+$ given $f_i$, for $i=0,1,\ldots,k$.
Stratification is guaranteed by ensuring that no choice
of chains create a circular ordering among the chosen features.
For this, each such chain $C_f$ imposes an ordering constraints $f_i\prec f_{i+1}$
that are maintained in the set $\Ord$ (line 9) and that is queried for selecting features (line 7).

\begin{algorithm}[t]
  \caption{\genex for solving $H(\F,\X^+,\X^-)$.}
  \label{alg:greedy}
  \begin{minipage}{\linewidth}
    \smallskip
    \textbf{Input:} Hitting set problem $H\,{=}\,H(\F,\X^+,\X^-)$. \\[2pt]
    \textbf{Output:} \FAIL, or hitting set $\G$ for $H(\F,\X^+,\X^-)$. 
    \smallskip
    \begin{algorithmic}[1]
      \State For each $f$ in $\F$, let $\cost(f) \gets \text{complexity}(f)$.
      \smallskip
      \State For each $f\,{\in}\,\F$, compute $C_f\,{=}\,\langle f_0, \ldots,f_{k+1}\,{=}\,f \rangle$ such that $f_0$
        is monotone in $\X^+$, $f_{i+1}$ is monotone in $\X^+$ given $f_i$, and $C_f$ is of \textbf{minimum cost,}
        where the cost of $C_f$ is $\sum_{i=0}^{k+1} \cost(f_i)$.
      \smallskip
      \State Let $\G:=\emptyset$
      \smallskip
      \State Let $\Ord:=\emptyset$ be the empty ordering of chosen features.
        Feature $f$ is \textbf{eligible} if $\Ord\,{\cup}\,\set{\tup{g,g'}\mid\tup{g,g'}\in C_f}$ is acyclic.
        The set of eligible features is denoted by $\mathcal{E}\,{=}\,\mathcal{E}(\F,\Ord)$.
      \smallskip
      \While{$\G$ is not a solution for $H$}
      \smallskip
      \State Let $f^*\,{\in}\,\mathcal{E}$ be a feature of \textbf{max} $\text{score}(f^*)$, where
      \smallskip
      \Statex\qquad $\text{score}(f)\,{=}\,|\{ S\,{\in}\,H \mid \G\cap S=\emptyset\ \land\ C_f \cap S\neq\emptyset\}|$
      \smallskip
      \Statex\quad\ \ divided by the cost of $C_f$.
      \smallskip
      \State If $\mathcal{E}\,{=}\,\emptyset$ or $\text{score}(f^*)\,{=}\,0$, \textbf{return} \FAIL
      \smallskip
      \State Let $\G \gets \G \cup C_{f^*}$
      \smallskip
      \State Let $\Ord \gets \Ord \cup \set{\tup{g,g'}\mid\tup{g,g'} \in C_{f^*}}$
      \smallskip
      \State Set $\cost(f)\gets 0$ for $f \in C_{f^*}$.
      \smallskip
      \State Recompute chains given new cost of features in $C_f$
      \smallskip
      \EndWhile
      \smallskip
      \State \textbf{return} $\G$
      \smallskip
    \end{algorithmic}
  \end{minipage}
\end{algorithm}

The greedy algorithm runs in low polynomial time in the
size of $H(\F,\X^+,\X^-)$. More interesting is that it is sound and
complete for the basic learning task:

\begin{theorem}[Soundness and Completeness of \genex]
  \label{thm:blt:1}
  Let $\F$ be a pool of features, and let $\X^+$ and $\X^-$ be sets of
  good and bad transitions, respectively.
  If \genex returns $\G\,{\subseteq}\,\F$ on input $H(\F,\X^+,\X^-)$, then
  \begin{enumerate}[1.]
    \item $\G$ is a hitting set for $H(\F,\X^+,\X^-)$, and
    \item the policy $\pi\,{=}\,\pi(\G,\X^+)$ obtained by projecting the transitions in $\X^+$ over $\G$ solves $\BLT(\F^+,\X^+,\X^-)$.
  \end{enumerate}
  Else, if \genex returns \FAIL on input $H(\F,\X^+,\X^-)$, then
  there is no solution $\pi$ for $\BLT(\F^+,\X^+,\X^-)$ whose features separate
  goal from non-goal states.
\end{theorem}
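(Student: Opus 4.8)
The plan is to establish the three claims in order, using the structural invariants that \genex maintains on the triple $(\G, \Ord, \set{C_f})$ throughout its execution.

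First, for claim 1, I would argue that \genex exits the \textbf{while} loop only in two ways: either $\G$ becomes a solution for $H$, in which case $\G$ is returned and is by construction a hitting set; or the guard on line 7 fires and \FAIL is returned. So whenever a set $\G$ (rather than \FAIL) is returned, it is a hitting set. This is immediate and needs no real work beyond reading the pseudocode, but it is worth stating because claim 2 depends on it.

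Second, for claim 2, I need to show that $\pi = \pi(\G,\X^+)$ is a \emph{stratified} policy with $\X^+\subseteq\pi$ and $\X^-\cap\pi=\emptyset$. The membership $\X^+\subseteq\pi$ is automatic since each rule of $\pi$ is the projection of some transition in $\X^+$, and a transition is compatible with its own projection. For $\X^-\cap\pi=\emptyset$: if some $(s,t)\in\X^-$ were compatible with the projection of some $(s',t')\in\X^+$, then every feature in $\G$ would change the same way across both transitions and agree in Boolean value at $s,s'$; but the second family of subsets in $H(\F,\X^+,\X^-)$ is exactly $\set{f \mid f \text{ changes differently in }(s,t)\text{ and }(s',t')}$, and since $\G$ hits this set (claim 1), some selected feature distinguishes them — contradiction. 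For stratification I would invoke Definition~\ref{def:stratified}: condition 1 (each rule entails a change of some feature) holds because the first family of subsets forces $\G$ to contain, for each $(s,t)\in\X^+$, at least one feature that changes across $(s,t)$, and that feature appears in the effect of the corresponding rule. For condition 2, I would define $\kappa$ directly from the chains: whenever \genex commits $C_{f^*}=\tup{f_0,\ldots,f_{k+1}}$ it adds the edges $f_i\prec f_{i+1}$ to $\Ord$, and eligibility (line 4) guarantees $\Ord$ stays acyclic; so at termination $\Ord$ is a partial order on $\G$, and I set $\kappa(f)$ to be the length of the longest $\Ord$-chain ending at $f$. Then features with $\kappa(f)=0$ are $\Ord$-minimal, hence were added as the head $f_0$ of some chain, hence are monotone in $\X^+$ (and one checks monotone in $\X^+$ implies monotone in $\pi$ since $\pi$'s rules are projections of $\X^+$-transitions); and a feature with $\kappa(f)>0$ has an $\Ord$-predecessor $g$ with $\kappa(g)<\kappa(f)$ arising from a chain edge $g\prec f$, which by construction of $C_{\cdot}$ means $f$ is monotone in $\X^+$ given $g$, hence monotone in $\pi$ given $g$. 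The one delicate point here is the passage between ``monotone for the transitions $\X^+$'' and ``monotone for the rules $\pi$'': I need that the projection rules do not introduce spurious increments/decrements of a feature beyond those witnessed in $\X^+$, which follows because the projection of $(s,t)$ records $\INC{f}$ (resp.\ $\DEC{f}$) precisely when $f(s)<f(t)$ (resp.\ $f(s)>f(t)$), and $\UNK{}$ is never produced — so the set of rules that increase/decrease $f$ corresponds exactly to the set of $\X^+$-transitions across which $f$ increases/decreases, and similarly the condition side $\EQ{g}/\GT{g}$ of a rule matches the Boolean value of $g$ at the source state. Assembling these, Definition~\ref{def:stratified} is met, so $\pi$ is stratified, and by Theorem~\ref{thm:termination1} (or its $k{=}1$ instance) it is terminating; combined with the two set-inclusion facts, $\pi$ solves $\BLT(\F,\X^+,\X^-)$.

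Third, for the \FAIL case, I need the contrapositive: if there exists a solution $\pi'$ for $\BLT(\F,\X^+,\X^-)$ whose features separate goal from non-goal states, then \genex does not return \FAIL. Let $\G'$ be the feature set appearing in $\pi'$. The separation hypothesis makes $\G'$ hit the third family of subsets; $\X^+\subseteq\pi'$ forces $\G'$ to hit the first family (each $\X^+$-transition is compatible with some rule of $\pi'$, whose effect must entail a change of some feature — this is exactly condition~1 of stratification, which $\pi'$ satisfies); and $\X^-\cap\pi'=\emptyset$ forces $\G'$ to hit the second family (otherwise a bad transition would be compatible with the rule projecting a good one). So $\G'$ is a hitting set for $H$. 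Moreover, since $\pi'$ is stratified, its ranking $\kappa'$ witnesses that every $f\in\G'$ sits atop a valid monotonicity chain inside $\G'$ (peel off predecessors by decreasing rank), so the chains $C_f$ for $f\in\G'$ can be chosen within $\G'$ and their induced orderings are compatible with $\kappa'$, hence jointly acyclic. The argument is then that \genex, being a greedy hitting-set procedure that only ever rejects a feature when including its chain would create a cycle in $\Ord$, can never be forced into the \FAIL branch: as long as $\G$ is not yet a solution, at least one subset of $H$ is unhit, and some feature of $\G'\setminus\G$ lies in that subset and remains eligible (its chain, lying in $\G'$, is $\kappa'$-ordered and cannot close a cycle with the $\kappa'$-ordered edges already in $\Ord$), so it has positive score and $\mathcal{E}\neq\emptyset$. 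The \textbf{main obstacle} is precisely this last eligibility argument: one must show that the ordering constraints accumulated in $\Ord$ from earlier greedy choices never conflict with a valid global ranking. I would handle this by maintaining as a loop invariant that $\Ord$ is always refinable to (a linear extension of) $\kappa'$ — equivalently, that every edge ever added to $\Ord$ respects $\kappa'$ — which holds because each committed chain $C_{f^*}$ can, by the minimum-cost-and-eligibility choice together with the existence of $\G'$, be taken to lie within a $\kappa'$-consistent structure; once this invariant is in place, eligibility of the needed $\G'$-feature is automatic and the \FAIL branch is unreachable.
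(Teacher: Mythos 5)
Your proposal follows essentially the same route as the paper's (sketched) proof: soundness via the observation that $\G$ is a hitting set by construction and that the acyclicity of $\Ord$ yields a ranking $\kappa$ witnessing stratification of $\pi(\G,\X^+)$; completeness via a rule-minimal stratified solution $\pi'$ over a feature set $\G'$ that separates goals from non-goals, whose features keep hitting the unhit subsets so that the maximum score never drops to zero. Your elaboration of the soundness half --- deriving $\kappa$ as longest-chain length in the partial order $\Ord$, checking that $\Ord$-minimal features were added as chain heads and are hence monotone over $\X^+$, and verifying that projection makes ``monotone over the $\X^+$ transitions'' coincide with ``monotone in the rules of $\pi$'' because projected rules never contain unknown effects and their conditions record the source-state Boolean valuations --- is exactly what the paper leaves to the reader, and it is correct.

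The one place where you go beyond the paper is also the one place your argument does not close. In the completeness direction you rightly observe that it is not enough for some $f\in\G'$ to hit an unhit subset: $f$ must also be \emph{eligible}, i.e., its current chain $C_f$ must not close a cycle with the edges already accumulated in $\Ord$. Your proposed invariant --- that every edge ever added to $\Ord$ respects the ranking $\kappa'$ of $\pi'$ --- is asserted rather than proved, and it is not obviously maintainable: the greedy step selects the maximum-score eligible feature over all of $\F$, not over $\G'$, and the chains are (re)computed as minimum-cost chains with no reference to $\Ord$ or to $\kappa'$, so $\Ord$ can accumulate edges among features outside $\G'$, or edges between features of $\G'$ coming from cheaper chains that disagree with $\kappa'$. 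The paper's own proof sketch is silent on precisely this point (it only asserts that some $f\in\G'$ has non-zero score, without addressing eligibility), so your proposal is no weaker than the published argument; but the eligibility step is a genuine gap that your patch, as stated, does not fill.
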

\begin{proof}[Proof (sketch).]
  \textbf{Soundness.}
  Let us assume that \genex returns $\G$. Clearly, $\G$ is a hitting set for $H(\F,\X^+,\X^-)$.
  Also, it is not hard to see that there is a ranking $\kappa$ that renders $\pi(\G,\X^+)$ stratified.
  Indeed, since $\Ord$ remains acyclic, there is such ranking $\kappa$ throughout the execution of
  the loop, at the start of each iteration.

  \textbf{Completeness.}
  Let $\pi$ be a stratified policy over $\G\,{\subseteq}\,\F$ such that $X^+\,{\subseteq}\,\pi$,
  $\X^-\cap\pi\,{=}\,\emptyset$, and $\G$ separates goal from non-goal states. Moreover, let $\pi$
  be such a policy with a \textbf{minimum} number of rules; i.e., each rule in $\pi$ is compatible
  with at least one transition in $\X^+$.
  One can show that during the execution of \genex, at the beginning of
  each iteration, there is a feature $f$ in $\G$ such that the score of $C_f$ is non-zero.
  Hence, \genex cannot terminate with failure.
\end{proof}

Finally, if $\Q$ is a finite collection of instances, then one can
construct sets $\X^+$ and $\X^-$ of good and bad transitions over the instances in $\Q$
such that any solution $\pi$ for $\BLT(\F,\X^+,\X^-)$ solves $\Q$.

\begin{theorem}
  \label{thm:blt:2}
  Let $\Q$ be a \textbf{finite} class of instances, let $\F$ be a pool of features, and let $\X^+$
  and $\X^-$ be set of good and bad transitions that satisfy the following:
  \begin{enumerate}[1.]
    \item For each instance $P$ in $\Q$, and each alive state $s$ in $P$, there is a
      transition $(s,s')$ in $\X^+$, and
    \item For each instance $P$ in $\Q$, $\X^-$ contains all the transitions $(s,s')$
      in $P$ where $s$ is alive and $s'$ is a dead-end state.
  \end{enumerate}
  If $\pi$ is a solution of $\BLT(\F,\X^+,\X^-)$, then $\pi$ solves $\Q$.
\end{theorem}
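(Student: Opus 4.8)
The plan is to show that any solution $\pi$ of $\BLT(\F,\X^+,\X^-)$ is closed, safe, and acyclic in $\Q$, so that Theorem~\ref{thm:solutions:Q} applies. Acyclicity is immediate: by Definition~\ref{def:learning:basic}, a solution $\pi$ is a \textbf{stratified} policy, hence by Theorem~\ref{thm:termination1} it is terminating, and a terminating policy cannot generate infinite trajectories on \emph{any} instance, in particular on those in $\Q$. So the real work is establishing closedness and safeness, and this is where hypotheses 1 and 2 on $\X^+$ and $\X^-$ come in.

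For \textbf{closedness}, I would argue that every $\pi$-reachable alive state $s$ in an instance $P\,{\in}\,\Q$ has an outgoing $\pi$-transition. By hypothesis 1, since $s$ is alive there is a transition $(s,s')\,{\in}\,\X^+$; since $\pi$ solves the BLT, $\X^+\,{\subseteq}\,\pi$, so $(s,s')\,{\in}\,\pi$ and $s$ is not a sink. (One should note that $\pi$-reachability is not actually needed here: \emph{every} alive state of $P$, reachable under $\pi$ or not, gets a transition, which is even stronger.) For \textbf{safeness}, I would argue by contradiction: suppose some $\pi$-trajectory from the initial state of $P$ reaches a dead-end state. Take the first such trajectory prefix $\tup{s_0,\ldots,s_m}$ with $s_m$ a dead end; then $s_{m-1}$ is alive (it is reachable, not a goal since it has a successor leading to a dead end with no goal beyond, and not itself a dead end by minimality), and $(s_{m-1},s_m)\,{\in}\,\pi$. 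But $s_{m-1}$ alive and $s_m$ a dead end means, by hypothesis 2, that $(s_{m-1},s_m)\,{\in}\,\X^-$, contradicting $\X^-\,{\cap}\,\pi\,{=}\,\emptyset$. Hence no $\pi$-trajectory reaches a dead end, i.e.\ $\pi$ is safe.

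Having shown $\pi$ is closed, safe, and acyclic in every $P\,{\in}\,\Q$, Theorem~\ref{thm:solutions:Q} gives that $\pi$ solves each $P\,{\in}\,\Q$, i.e.\ $\pi$ solves $\Q$. The one subtlety to handle carefully is that closedness and safeness as defined refer to \emph{$\pi$-reachable} states and to trajectories \emph{seeded at the initial state}, so the argument should be phrased in terms of such trajectories (inducting along a $\pi$-trajectory from $s_0$) rather than over all states; hypotheses 1 and 2 are stated for all alive states / all alive-to-dead-end transitions, which is more than enough, so this causes no difficulty.

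The main obstacle, such as it is, is purely a matter of bookkeeping: one must be sure that along a maximal $\pi$-trajectory from the initial state, the only way the trajectory can fail to end in a goal is either (a) it is infinite — ruled out by acyclicity/termination — or (b) it ends at a non-goal state with no $\pi$-successor, which is either an alive state (ruled out by closedness via hypothesis 1) or a dead end (ruled out by safeness via hypothesis 2, since the trajectory would have had to traverse an alive-to-dead-end transition, which lies in $\X^-$ and hence not in $\pi$). Assembling these three exclusions is the whole proof; no delicate estimate or construction is required beyond invoking the earlier theorems.
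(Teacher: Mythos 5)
Your proof is correct and essentially matches the paper's: the paper argues directly on a maximal $\pi$-trajectory (finite because $\pi$ is stratified, its endpoint is not a dead end by hypothesis~2 since such transitions lie in $\X^-$, and not alive by hypothesis~1 since that would contradict maximality, hence a goal), which is exactly the ``assembly'' in your last paragraph; your only cosmetic difference is packaging the same three facts as closedness, safeness, and acyclicity and then invoking Theorem~\ref{thm:solutions:Q}. One tiny imprecision (shared with the paper's own proof) is your justification that the predecessor of the first dead end is not a goal state --- this follows from the convention that trajectories are truncated at goal states, not from it ``having a successor leading to a dead end.''
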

\begin{proof}
  Let $P$ be in $\Q$, and let $\tau=(s_0,s_1,\ldots,s,t)$ be a \textbf{maximal}
  $\pi$-trajectory in $P$. Clearly, $\tau$ is acyclic since $\pi$ is stratified.
  Since all transitions $(s,t)$ in $P$ from an alive state $s$ to a dead-end $t$
  are in $\X^-$, $t$ is not a dead-end state. On the other hand, $t$ cannot be alive as
  otherwise there would be a transition $(t,t')$ in $\X^+$, implying that $\tau$ is not
  maximal. Therefore, $t$ must be a goal state.
\end{proof}

\Omit{
  \subsection{Policy Extraction and Built-in Stratification}

  Algorithm~\ref{alg:greedy} not only solves
  the hitting set problem $H\,{=}\,H(\F,\X^+,\X^-)$,
  but also the \textbf{basic learning task} given by
  the tuple $\tup{\F,\X^+,\X^-}$. That is, the resulting
  hitting set $\G$ alone with the state transitions in $X^+$,
  yield a policy $\pi$ over the features $\G \subseteq \F$
  that is \textbf{stratified}, and that includes
  the $\X^+$ transitions and excludes $\X^-$ transitions.

  Like in \cite{frances:aaai2021}, each good transition $(s,t)$ in $X^+$,
  along with the set of selected features $\G$, defines a policy
  rule $C \mapsto E$ such that $C$ captures the Boolean valuation of the
  selected features in $s$, and $E$ captures the change in their values
  in the transition from $s$ to $t$. In this way, the resulting policy $\pi$
  includes the transitions in $\X^+$, i.e., they are $\pi$-transitions,
  and by the definition  of the  hitting sets in $H(\F,\X^+,\X^-)$,
  $\pi$ excludes the transitions in $\X^-$ which are not $\pi$-transitions.

  We are thus left to show why the policy $\pi$ determined by the rules
  determined by the good transitions $X^+$ and the features $\G$ obtained
  by Algorithm~\ref{alg:greedy}, is stratified. For this, we need to
  define more precisely the chain of features $C_f = \tup{f_0,\ldots,f_k=f}$
  mentioned in line 2, which are added along with $f$ to $\G$ in line 5,
  when $f$ is the selected max-score feature to be added to $\G$.

  While lines 3--10 of Algorithm~\ref{alg:greedy} express the hitting
  set algorithm, lines 1 and 2 express the preprocessing involved.
  At the preprocessing step, the set of selected features $\G$ is not
  yet known, nor the policy defined by $\G$ and the $\X^+$ transitions.
  This set of transitions, however, is known at \textbf{preprocessing} time,
  and it is enough to \textbf{stratify} all features $f$ in the pool $\F$.
  Indeed, a feature $f$ is \textbf{monotone} over the $\X^+$ transitions
  if there are no two transitions in $\X^+$ such that one increases $f$
  and the other decreases it. And a feature $f$ is \textbf{monotone
  given a feature} $g$, if $f$ monotone in the transitions
  $\varrho(\X^+,g,0)$ \textbf{and} $f$ is monotone in the transitions $\varrho(\X^+,g,1)$,
  where we have the set of rules $R$ in the definitions in Section~\ref{subsection:stratified}
  by a set of state transitions $\X^+$. The former refer to the transitions $(s,t)$ in $\X^+$
  where $g$ does not change and $g(s)=0$, and the latter to the transitions $(s,t)$ in $\X^+$
  where $g$ does not change and $g(s)\not= 0$. A policy defined by features $\G$ that
  can be stratified along the $\X^+$ transition will be stratified and hence terminating.
  This property is enforced in Algorithm~\ref{alg:greedy} by adding each feature $f$
  to $\G$ with the features $f_i$ in a chain $C_f = \tup{f_0,\ldots,f_k=f}$
  such that $f_{i+1}$ is monotone given that $f_i$.

  The result is that Algorithm~\ref{alg:greedy} solves the basic learning task:

  \begin{theorem}[Solving BLT]
    \label{thm:blt}
    Algorithm~\ref{alg:greedy} solves the basic learning task given by the triplet
    $\tup{\F,\X^+,\X^-}$ if the task has a solution, producing then a stratified policy $\pi$
    over features from $\F$ that includes the $\X^+$ transitions and excludes the $\X^-$
    transitions.
  \end{theorem}

  The solution is not necessarily optimal in the sense of minimizing the complexity of
  the features selected, but it is meaningful and scalable.
}

\Omit{
  \begin{theorem}[Finding Stratified Policies]
    \label{thm:policy:1}
    Let $(\X^+,\X^-)$ be a pair of sets of good and bad transitions, let $\F$ be a pool
    of features, and let $\G\,{\subseteq}\,\F$ be a hitting set of $H(\F,\X^+,\X^-)$.
    If there is a ranking $\kappa$ for the features in $\G$ that satisfies conditions R1a and R1b,
    then the policy $\pi\,{=}\,\pi(\X^+,\G)$ satisfies conditions R1--R3.
    Moreover, every transition in $\X^+$ is compatible with $\pi$.
  \end{theorem}
  \begin{proof}[Proof (sketch).]
    Let $\G$ be a hitting set for $H(\F,\X^+,\X^-)$ that satisfies conditions R1a and R1b;
    i.e., there is a suitable ranking $\kappa$ for the features in $\G$.
    By the definition of the hitting set problem, conditions R2 and R3 are satisfied by $\G$,
    and also $\pi$ satisfies the first condition in Definition~\ref{def:stratified}.
    Finally, it is easy to see that the other conditions in Definition~\ref{def:stratified}
    are satisfied by $\kappa$. Hence, $\pi$ is a stratified policy.
  \end{proof}

  \begin{theorem}[Soundness and Completeness]
    \label{thm:policy:2}
    Let $H\,{=}\,H(\F,\X^+,\X^-)$ be the hitting set problem induced by the sets $\X^+$ and $\X^-$
    of good and bad transitions, respectively, and let $\G$ be a solution for $H$ computed by
    Alg.~\ref{alg:greedy}. Then, $\G$ satisfies conditions R1a and R1b.

    On the other hand, if Alg.~\ref{alg:greedy} \textbf{fails} to find a hitting set for $H$,
    then there is no solution $\G$ for $H$ that satisfies conditions R1a and R1b.
  \end{theorem}
  \begin{proof}[Proof (sketch).]
    \textbf{Soundness.}
    Let us assume that Alg.~\ref{alg:greedy} returns the hitting set $\G$. It is not hard
    to see that there is a ranking $\kappa$ that satisfies R1a and R1b.
    Indeed, there is such ranking $\kappa$ throughout the execution of the loop, at the
    start of each iteration. Firstly, the empty ranking works for the empty $\G$.
    Then, assuming that such a ranking exists for an iteration where the feature $f^*$
    is selected, the ranking $\kappa$ is amended to work for $\G\cup C_{f^*}$
    using the chain $C_{f^*}$, since the chain provides a complete ``monotone support''
    for $f^*$.

    \textbf{Completeness.} Assume that there is a solution of $H$ that satisfies conditions R1a and R1b,
    and let us suppose that Algorithm~\ref{alg:greedy} fails. Let $\G^*$ be a hitting set for
    $H$ that satisfies conditions R1a and R1b, and let $\G$ be a last set of features computed
    in a run of Alg.~\ref{alg:greedy}, just before failing in line 6.
    It is not hard to see that there is a ranking $\kappa$ for $\G$ that satisfies conditions
    R1a and R1b. The problem with $\G$ is that it is not a hitting set for $H$. That is, there
    is a subset $S\,{\in}\,H$ with $S\cap\G=\emptyset$. However, since $\G^*$ is such a hitting
    set, there is a feature $f^*$ in $\G^*\setminus\G$ such that $f^*\,{\in}\,S$.
    Clearly, the score of $f^*$ is non-zero, which contradicts the failure of Alg.~\ref{alg:greedy}.
  \end{proof}

  \begin{theorem}
    \label{thm:policy:3}
    Let $\Q$ be a class of instances, let $\F$ be a pool of features, and let $\X^+$
    and $\X^-$ be set of good and bad transitions that satisfy the following:
    \begin{enumerate}[1.]
      \item For each instance $P$ in $\Q$, and each alive state $s$ in $P$, there is a transition
        $(s,s')$ in $\X^+$, and
      \item For each instance $P$ in $\Q$, $\X^-$ contains all the transitions $(s,s')$ in $P$,
        where $s$ is alive and $s'$ is a dead-end state.
    \end{enumerate}
    If $\G$ is a solution of the hitting set problem $H(\F,\X^+,\X^-)$, then
    the policy $\pi=\pi(\G,\X^+)$ determined by transitions $X^+$ and features $\G$ solve $\Q$.
  \end{theorem}
  \begin{proof}
    \hector{Why conditions ``that satisfies conditions R1a and R1b''?? Don't see to be needed, right?}
    Let $\pi$ be the policy defined by a solution $\G$ of the hitting set problem $H$ that
    satisfies R1a and R1b. By Theorem~\ref{thm:policy:1}, $\pi$ satisfies properties R1--R3;
    i.e., $\pi$ is stratified, includes all transitions in $\X^+$, and excludes all transitions in $\X^-$.
    Let $P$ be an instance in $\Q$, and let $\tau=(s_0,s_1,\ldots,s,t)$ be a \textbf{maximal}
    $\pi$-trajectory in $P$. Clearly, $\tau$ is acyclic.
    Since all transitions $(s,t)$ in $P$ from an alive state $s$ to a dead-end $t$
    are in $\X^-$, $t$ is not a dead-end state. On the other hand, $t$ cannot be alive as
    otherwise there would be a transition $(t,t')$ in $\X^+$, implying that $\tau$ is not
    maximal. Therefore,
    $t$ must be a goal state. Hence, $\pi$ solves $P$ as all maximal $\pi$-trajectories
    in $P$ are goal reaching, and thus $\pi$ solves $\Q$.
  \end{proof}
}

\section{Meta Learning Task: \textsc{Wrapper}}

The basic learning task $\BLT(\F,\X^+,\X^-)$ is solved efficiently
by \genex, provided it has solution.
There is however an important open question: what state transitions
from the training instances to include in $\X^+$ and $\X^-$?

One answer to above question is given by sets $\X^+$ and $\X^-$
that comply with the conditions in Theorem~\ref{thm:blt:2}.
However, this is impractical as computing such sets requires
the expansion of the state space for the training instances in $\Q$,
and would result in very large sets of transitions, at least
one transition per each alive state in each instance.
In this section, we describe an efficient \textbf{wrapper algorithm},
simply called \wrapper, that starting from example paths computed by
a planner, identifies additional transitions that are added to $\X^+$ and $\X^-$.
The algorithm, being greedy, is incomplete, yet it is able to
solve a large number of benchmarks.

The idea behind \wrapper, depicted in Alg.~\ref{alg:wrapper}, is simple.
To find a policy that solves $\Q$, the wrapper works with a small subset
$\Q'\,{\subseteq}\,\Q$ (line 2), finds a solution $\pi$ for $\Q'$ using \genex (lines 4--7),
and tests $\pi$ on $\Q$ (line 8).
If $\pi$ solves $\Q$, it returns $\pi$. Else, $\Q'$ is updated, and the process repeats.

Finding a solution for $\Q'$ may involve multiple calls to \genex
with different sets $\X^+$ and $\X^-$ of good and bad transitions for $\Q'$.
If $H(\F,\X^+,\X^-)$ has solution $\G$, the policy $\pi\,{=}\,\pi(\G,\X^+)$
is executed on the instances in $\Q'$. Three different outcomes may arise:
1)~$\pi$ solves $\Q'$; 2)~an alive state $s$ is found where $\pi$ is not
defined, hence $\pi$ is not closed; or 3)~a dead-end state $t$ is reached,\footnote{Dead-end states
  are identified with the planner: state $t$ is declared as dead-end state
  iff the planner cannot find a plan for it.}
hence $\pi$ is not safe.
In case 2, $\X^+$ is extended with a transition $(s,t)$ obtained with the planner.
In case 3, $\X^-$ is extended with the last $\pi$-transition $(s,t)$ leading to the dead-end state.
This is repeated until a solution for $\Q'$ is obtained, or
\genex fails, in which case \wrapper also fails (line 5).
This loop (lines 4--7) is called the \textbf{inner loop} of the wrapper.

The optional policy simplification in line 6 tries to remove conditions and
insert unknown effects (i.e., of type $\UNK{n}$) in the rules
of the projected policy $\pi\,{=}\,\pi(\G,\X^+)$ to increase its coverage.
The simplification is an iterative, greedy, process that at each iteration
attempts to remove a condition or insert an unknown effect while preserving
the \emph{invariant:} $\pi$ remains stratified under the same ranking $\kappa$,
$\pi$ has the same conditional monotonicites, and $\pi\,{\cap}\,\X^-\,{=}\,\emptyset$.

Finally, if the found policy, which solves $\Q'$, does not solve $\Q$,
two strategies for updating $\Q'$ are considered.
Both strategies work with a \emph{static ordering} $P_1,P_2,P_3,\ldots$ of the instances in $\Q$,
determined by the length of the plans computed by the planner $\Phi$, from larger to shorter plans.
In the \textbf{first strategy,} named $S_1$, $\Q'$ is always a singleton, initialized to \set{P_1}.
If $\pi$ solves $\Q'\,{=}\,\set{P_k}$ but not $P_\ell$, where $\ell$ is minimum,
$\Q'$ is updated to \set{P_\ell} if $\ell\,{>}\,k$, else to \set{P_{k+1}}.
In the \textbf{second strategy,} named $S_2$, $\Q'$ also starts as \set{P_1}.
But, if $\pi$ solves $\Q'$, with max instance index $k$, but not $P_\ell$,
of minimum index, $\Q'$ is set to $\Q'\cup\set{P_\ell}$ if $\ell\,{<}\,k$, else to \set{P_\ell}.
The loop comprising lines 4--8 is referred to as the \textbf{outer loop.}
The number of iterations of the outer loop is bounded by $|\Q|$
for ther first strategy, and by $|\Q|^2$ for the second strategy.

\Omit{
  The meta-learning task is about selecting positive and negative
  state transitions $\X^+$ and $\X^-$ from a sampled set $\Q'$
  of instances from the target class $\Q$ by using a planner.
  If the resulting hitting set problem has a solution $\G$,
  the stratified policy $\pi=\pi(\G,\X^+)$ determined by the
  features $\G$ and the transitions $X^+$ is executed in the
  instances in $\Q'$. Three different outcomes may arise:
  1)~$\pi$ solves $\Q'$; 2)~an alive state $s$ is found where
  the policy is not defined, hence $\pi$ is not closed; 3)~a dead-end
  state $s$ is found (\hector{how dead-end determined?}), hence $\pi$ is not safe. In case 2,
  $X^+$ is extended with a transition $(s,t)$ obtained with a planner.
  In case 3, $X^-$ is extended with the transition leading to the
  dead-end state. In case 1, finally, the policy $\pi$ is evaluated
  on a larger validation set $\Q'' \subset \Q$, and if $\pi$ solves
  $\Q''$ (verification is successful), the policy $\pi$ is returned.
  Else, the process starts a new with a different subset of training
  instances $\Q'$. This selection is done in such a way that no
  subset of training instances $\Q'$ is considered twice.
  The result is a wrapper around Algorithm~\ref{alg:greedy}
  shown as Algorithm~\ref{alg:wrapper}. While the hitting set
  algorithm is complete, and will find a solution to the basic learning
  task $B=\tup{\F,\X^+,\X^-}$ if there is one, there is no similar
  guarantee for the wrapper. Namely, there may be sets of
  positive and negative state transitions $\X^+$ and $\X^-$
  such that a hitting set $\G$ of the problem $H=H(\F,\X^+,\X^-)$
  yields a policy $\pi=\pi(\G,\X^+)$ that solves the whole class of problems $\Q$,
  but the wrapper may be unable to find them.
}

\begin{algorithm}[t]
  \caption{\wrapper for \genex.}
  \label{alg:wrapper}
  \begin{minipage}{\linewidth}
    \smallskip
    \textbf{Input:} Training set $\Q$ with planning instances $P$, pool $\F$ of features, and planner $\Phi$. \\[2pt]
    \textbf{Output:} \FAIL, or stratified policy $\pi$ that solves $\Q$.
    \smallskip
    \begin{algorithmic}[1]
      \State Use the planner $\Phi$ on each instance $P$ in $\Q$ to obtain plan $\tau_P$ for $P$.
        Collect the transitions in $\tau_P$ into the sets $\X^+_P$ and $\X^-_P=\emptyset$ of good and bad transitions for $P$.
      \smallskip
      \State \textcolor{blue!70!black}{\mbracket{\text{Outer loop, lines 2--8}}} Get non-empty subset $\Q'\,{\subseteq}\,\Q$ of instances.
        If no more subsets $\Q'$ are available (see text), return \FAIL.
      \smallskip
      \State Let $\X^+\,{\gets}\,\cup\set{\X^+_P \,{\mid}\, P\,{\in}\,\Q'}; \X^-\,{\gets}\,\cup\set{\X^-_P \,{\mid}\, P\,{\in}\,\Q'}$.
      \smallskip
      \State \textcolor{blue!70!black}{\mbracket{\text{Inner loop, lines 4--7}}} Solve $H\,{=}\,H(\F,\X^+,\X^-)$ with \genex (Algorithm~\ref{alg:greedy}).
      \smallskip
      \State \textbf{Return} \FAIL, if \genex fails, else let $\pi\,{=}\,\pi(\G,\X^+)$ for the solution $\G$ of $H$.
      \smallskip
      \State \textcolor{blue!70!black}{\mbracket{\text{Optional}}} Simplify policy $\pi$ (see text).
      \smallskip
      \State Test $\pi$ on $\Q'$. If, for some $P$ in $\Q'$, there is a $\pi$-trajectory $\tup{s_0,\ldots,s,t}$
        such that $t$ is identified as a dead-end state, or $t$ is not covered by $\pi$, then: in the first case, augment
        $\X^-$ with transition $(s,t)$ (to be avoided), else, in the second case, augment $\X^+$ with a transition
        $(t,t')$ ``recommended'' by the planner $\Phi$ (to be covered). \textbf{Continue} to Step 4.
      \smallskip
      \State Test $\pi$ on $\Q$. If $\pi$ does not solve $P$ in $\Q$, update the subset $\Q'$ to contain $P$ (and possibly
        other instances), and \textbf{continue} to Step 2.
        Else, \textbf{return} $\pi$ as $\pi$ solves $\Q$.
        (See the text for strategies to update $\Q'$.)
      \smallskip
    \end{algorithmic}
  \end{minipage}
\end{algorithm}

\Omit{
  The input to the wrapper is the class $\Q$ of instances, a pool $\F$ of features, and
  a planner/oracle $\Phi$ that is called the tutor. The tutor serves three purposes.
  First, it generates example goal-reaching paths $\tau_P$ for the instances in $P$;
  second, it provides transitions $(s,s')$ for discovered alive states $s$ that have
  no known transition; and third, it provides the notion of dead-end state, as those
  that cannot be solved by $\Phi$.
  At the beginning, the instances in $\Q$ are filtered, keeping only those solvable
  by the tutor. Hence, in a way, the learner tries to learn a stratified rule-based
  model of the tutor.
  As the number of instances in $\Q$ can be rather large, hundreds of them, a subset
  $\Q'$ of $\Q$ is selected (line 2). The basic learner is then called with $\X^+$
  set to the transitions in the paths $\tau_P$, $P\,{\in}\,\Q'$, and $\X^-=\emptyset$.
  If the learner returns a policy $\pi$, the policy is first tested over the instances
  in $\Q'$, and then over all the instances in $\Q$, if $\pi$ solves $\Q'$.
}

\Omit{
  \subsection{Wrapping Up The Basic Learner}

  Theorem~\ref{thm:policy:3} suggests a method for finding general policies for $\Q$: fully
  expand (the state space of) each instance in $\Q$, and for each alive state, identify a
  plausible transition, and identify the transitions to dead-end states.
  This full expansion of instances is done in previous approaches for learning \cite{refs1,refs2}
  which can only handle a small number of small instances in $\Q$.

  On the other hand, not every alive state nor every transition to a dead-end state needs
  to be considered, it is sufficient to deal with the alive states and transitions to
  dead-end states that are reachable by the policy.
  Hence, we can wrap the basic learner by an iterative procedure that discovers the
  ``relevant'' alive states and transitions to dead-end states, which are used to grow
  the initial sets of good and bad transitions provided to the learner.
  This idea is elaborated into a fully working algorithm depicted in Alg.~\ref{alg:wrapper}.

  The input to the wrapper is the class $\Q$ of instances, a pool $\F$ of features, and
  a planner/oracle $\Phi$ that is called the tutor. The tutor serves three purposes.
  First, it generates example goal-reaching paths $\tau_P$ for the instances in $P$;
  second, it provides transitions $(s,s')$ for discovered alive states $s$ that have
  no known transition; and third, it provides the notion of dead-end state, as those
  that cannot be solved by $\Phi$.
  At the beginning, the instances in $\Q$ are filtered, keeping only those solvable
  by the tutor. Hence, in a way, the learner tries to learn a stratified rule-based
  model of the tutor.
  As the number of instances in $\Q$ can be rather large, hundreds of them, a subset
  $\Q'$ of $\Q$ is selected (line 2). The basic learner is then called with $\X^+$
  set to the transitions in the paths $\tau_P$, $P\,{\in}\,\Q'$, and $\X^-=\emptyset$.
  If the learner returns a policy $\pi$, the policy is first tested over the instances
  in $\Q'$, and then over all the instances in $\Q$, if $\pi$ solves $\Q'$.

  If $\pi$ fails on some $P\,{\in}\,\Q'$, it is because it either reaches an alive
  state $s$ that is not covered by $\pi$, or a dead-end state $t$.
  In the former case, the tutor $\Phi$ is called to provide a transition $(s,s')$ to
  be included in $\X^+$; in the latter, the transition $(s,t)$ generated by $\pi$ leading
  to $t$ is included in $\X^-$: and for both cases, the learner is called again
  with the updated sets of good and bad transitions (line 6).

  If $\pi$ solves all the instances in $\Q'$, but fails on some instance $P^\times$ in $\Q$,
  the set $\Q'$ is updated to include $P^\times$ (which now is to be covered).
  However, to avoid an exponential or unbounded number of iterations for the outer loop,
  we implement two different strategies for updating the set $\Q'$.
  Let $P1,P2,P3,\ldots$ be a \emph{static ordering} of the instance in $\Q$,
  let $j^\times$ be the index of instance $P^\times$, and let $j_0$ and $j_1$ be the two
  largest indices of the instances in $\Q'$, with $j_1\,{=}\,0$ if $|\Q'|\,{=}\,1$.
  The strategies called Forward and Forward$^+$ initially set $\Q'$ to
  $\set{P_1}$, and then update it to:
  \begin{enumerate}[$\bullet$]
    \item \textbf{Forward:}     $\set{P^\times}$ if $j_0\,{<}\,j^\times$; else $\set{P_{j_0+1}}$. 
    \item \textbf{Forward$^+$:} $\set{P^\times}$ if $j_0\,{<}\,j^\times$; $\set{P_{j_0},P^\times}$ if $j_1\,{<}\,j^\times\,{<}\,j_0$; else $\Q'\,{\cup}\,\set{P^\times}$.
      \textcolor{red}{[CHECK CODE]}
  \end{enumerate}
  The static ordering of instances is based on the length of the plans $\tau_P$ computed by the tutor,
  preferring larger plans over shorter plans.
  The number of iterations for the outer loop is bounded by $|\Q|$ and $|\Q|^2$
  for Forward and Forward$^+$, resp.

  \subsubsection{Simplification of Policies.}

  Line~5 in Alg.~\ref{alg:wrapper} performs an optional simplification of the
  policy $\pi$ computed by the learner. The simplification consists in reducing
  the size of the conditions $C$ in the policy rules \arule{C}{E} together
  with the replacement of concrete feature effects by ``unknown'' effects
  of the form `\UNK{f}'. The simplified policy $\pi$ is guaranteed to
  be stratified, to include all the transitions in $\X^+$, and exclude all
  the transitions in $\X^-$, yet it typically covers more states and transitions,
  thus increasing the chances for generalization.

  The inputs for the simplification procedure are the sets $\X^+$ and $\X^-$
  of good and bad transitions, respectively, the hitting set $\G$, and the
  ranking function $\kappa$.

  Each rule $r$ in $\pi$ is the projection over $\G$ for a transition in $\X^+$,
  that is denoted by $e_r$,
  Let $f_r$ be a feature in $\G$ of \textbf{minimum rank} that changes across $e_r$,
  and let $g_r$ be a feature in $\G$ of \textbf{minimum rank} that \textbf{enables} $f_r$
  (i.e., $\kappa(f_r) > \kappa(g_r)$ and $f_r$ is monotone in $\pi$ given $g_r$).
  \textcolor{red}{When $f_r$ is monotone, there is no $g_r$. Text is not clear about this...}

  We need notation to express the simplification.
  Let $\H$ be a subset of features in $\G$ that separate the good from the bad transitions;
  i.e., for transitions $e=(s,t)$ in $\X^+$ and $e'=(s',t')$ in $\X^-$, there is $f$ in $\H$
  that either has different Boolean value at $s$ and $s'$, or changes differently across $e$ and $e'$.
  The set of features in $\H$ that separate $e$ and $e'$ are denoted by $\H(e,e')$,
  while $\H(e)$ denotes $\cup\set{\H(e,e') \mid e'\in\X^-}$.
  For feature $f$ and transition $e=(s,t)$, the notation $f\mbracket{e}$ refers to the
  condition `\EQ{f}' or `\GT{f}' whether $f(s)=0$ or $f(s)>0$.
  Likewise, for a subset $\F'$ of feature, $\F'\mbracket{e}$ denotes $\set{f\mbracket{e} \mid f\in\F'}$.

  The simplification of policy $\pi$ creates a policy $\pi'$ as follows.
  For each rule $r\,{=}\,\arule{C}{E}$ in $\pi$, create a rule $r'\,{=}\,\arule{C'}{E'}$ in $\pi'$,
  that is like $r$, except:
  \begin{enumerate}[$\bullet$]
    \item If $f_r$ in Boolean,   $C'=\set{g_r\mbracket{e_r}, f_r\mbracket{e_r}} \cup \H(e_r)\mbracket{e_r}$,
    \item If $f_r$ in numerical, $C'=\set{f_r\mbracket{e_r}} \cup \H(e_r)\mbracket{e_r}$, and
    \item If $f\,{\in}\,\G\setminus\H(e_r)$ and $\kappa(f)\,{>}\,\kappa(f_r)$, then $E'$ has `$\UNK{f}$'. \textcolor{red}{[CHECK CODE]}
  \end{enumerate}

  This simplification procedure is safe:

  \begin{theorem}[Simplification is Safe]
    \label{thm:simplification}
    Let $\F$ be a pool of features, let $(\X^+,\X^-)$ be a pair of sets of good and bad transitions,
    let $\G$ be hitting set for $H\,{=}\,H(\F,\X^+,\X^-)$ that satisfies conditions R1a and R1b for
    ranking $\kappa$, and let $\pi'$ be a simplified policy computed from $(\X^+,\X^-,\G,\kappa)$.
    Then,
    \begin{enumerate}[1.]
      \item the policy $\pi'$ is stratified over $\G$ with same ranking $\kappa$,
      \item if transition $(s,t)$ is in $\pi$, then $(s,t)$ is in $\pi'$, and
      \item if transition $(s,t)$ is in $\X^-$, then $(s,t)$ is not in $\pi'$.
    \end{enumerate}
  \end{theorem}
  \begin{proof}
    The simplified policy $\pi'$ is like $\pi$ except that some rules \arule{C}{E} in $\pi$
    may have been simplified by removing conditions from $C$ and/or changing a specific effect
    of a feature $f$ to `\UNK{f}'. Hence, if $(s,t)\,{\in}\,\pi$, then $(s,t)\,{\in}\,\pi'$.
    On the other hand, the inclusion of the effects in $\H(e_r)\mbracket{e_r}$ in the rules $r$
    guarantee that $(s,t)\,{\not\in}\,\pi'$ for $(s,t)\,{\in}\,\X^-$. That is, properties 2 and 3
    are satisfied. It remains to show that $\pi'$ is stratified over $\G$ with ranking $\kappa$.

    Notice that the effects of features of rank $0$ remain unaltered. Thus, such features
    maintain their monotone status in $\pi'$. Let $f$ be a feature of rank $i$ that is
    monotone in $\pi$ given feature $g$ of rank less than $i$.
    This means that $f$ is monotone in $\varrho(\pi,g,0)$ and $f$ is monotone in $\varrho(\pi,g,1)$.

    \textcolor{red}{Blah blah blah blah ...}
  \end{proof}

  \subsubsection{Lack of Solutions.}
  Differently from previous approaches, based on SAT/ASP where finding the root cause of
  a failed attempt to learn a general policy is difficult, the new approach tells us directly
  specific reasons for failure. Indeed, the approach implemented by wrapped basic learner
  may fail for the following reasons:
  \begin{enumerate}[1.]
    \item Algorithm~\ref{alg:greedy} fails at line 6. This is due to lack of sufficiently
      expressive features in the pool, as there is at least one requirement (subset) in $H$
      that is satisfied by no feature (enabled by other features). This requirement can
      be associated with a transition in $\X^+$ (for which a feature that changes across
      is needed), a pair of transitions in $\X^+$ and $\X^-$ (for which a feature that
      changes differently is needed), or a pair of goal and non-goal states (for which
      a feature with different Boolean valuation is needed).
    \item Algorithm~\ref{alg:wrapper} fails at line 2 because all subsets $\Q'$ have been seen.
      In this case, we say that the training instances are \textbf{exhausted}. Even those
      policies that solve $\Q'$ where found, none of them was able to solve $\Q$.
      This indicates lack of sufficiently diverse instances in $\Q$.
  \end{enumerate}
  Depending on the type of failure, one can take measures to repair, or conclude
  that the learning is not possible because there are no features at all, of any
  complexity, that satisfy all the requirements.
  Below, we will see some examples of this type of analysis.
}

\section{Experiments}
\label{sect:exp}

We implemented \genex and \wrapper in Python with the help of the libraries \dlplan and \tarski
\cite{drexler-et-al-dlplan2022,tarski}.
The planner is \siw \cite{nir:ecai2012} which is fast and effective, except for \DOMAIN{Spanner}
where \bfws \cite{nir:aaai2017} is used because \siw can only solve instances with one nut.
The source code, benchmarks, and results are publicly available.\footnote{\url{http://github.com/bonetblai/learner-policies-from-examples}.}

The other approaches for computing general rule-based policies for generalized planning are:
the approach of \citeau{frances:aaai2021} for computing general
policies, and the approach of \citeau{drexler:icaps2022} for computing sketches of bounded width.
Both approaches are based on SAT/ASP, and require the full expansion of the state-space
for the training instances, and thus they cannot handle large training instances or
feature pools.
Indeed, the reported experiments for the first approach, only consider 9 domains, with pools of
up to 2,000 features, and instances with up to 6,000 non-equivalent states. General policies
are obtained by a careful choice of the training instances.
The second approach reports experiments on 9 domains, from which policies are obtained for 6
domains. The pools involved have at most a thousand features, and tens of states.
The domains solved by either approach are solved by the new approach,
but not vice versa: there are domains solved by the new approach that cannot be solved by
any of the previous approaches.

We carried out experiments on 34 standard planning domains of various sorts that pose
different type of challenges for generalized planning. Some of these domains have been
considered before, and others are new (we don't have space to describe the domains).
In all cases, we pre-computed a pool of features using \dlplan with a complexity bound of 15
and max-depth bound of 5, except for Logistics and 8-puzzle domains where the bound was
increased to 20.
The \wrapper starts with strategy $S_1$, switching to strategy $S_2$ when the first fails, as explained below.

\subsection{Results}
\label{sect:exp:results}

We divide the results into 5 categories, depending on the number of considered plans, and
the addition of good and bad transitions, in order to obtain a policy $\pi$ that solves $\Q$:
\begin{enumerate}[C1.]
  \item 5 domains that require a single call to \genex (i.e., only the first plan is considered, and no additional transitions are needed).
  \item 4 domains that require just the first plan, but additional good or bad transitions are needed.
  \item 5 domains that require just one plan, but the first plan considered does not yield a solution.
  \item 6 domains that require the second strategy $S_2$.
  \item 14 domains where no general policy is found with the reason (only 3 timeouts, and 11 for lack of expressivity, explained below).
\end{enumerate}
Notice that there are 14 domains where general policies are found by generalizing a single plan, cf.\ categories C1--C3.

Table~\ref{table:1} shows the results for categories C1--C4, 20 domains, where
$|\Q|$ is the number of instances in the training set $\Q$, $|\S|$ is the number of
states considered (seen) during training, $|\F|$ is the size of the feature pool,
`Strat.' is the wrapper strategy, and `Outer' and `Inner' are the total number of
iterations for outer and inner loop of \wrapper, respectively.

The table also contains details for the last outer iteration: $|\Q'|$ is
the number of instances from $\Q$ used in the last outer iteration,
`Inner$^*$' is the number of inner iterations for the last outer iteration,
$|\X^+|$ and $|\X^-|$ is the last number of good and bad transitions,
$|H|$ is the size of the hitting set problem, $|\G|$ is the size of
hitting set, and $|\pi|$ is the number of rules in the projected
policy $\pi\,{=}\,\pi(\G,\X^+)$.

The last four columns in Table~\ref{table:1} show the accumulated times in seconds spent by
preprocessing to support \genex, \genex itself, verification, and total wall time.
The verification is costly when the number of instances in $\Q$ is large,
or the policy visits a large number of states.

\begin{table*}[t]
  \centering
  \resizebox{\linewidth}{!}{
  \begin{tabular}{@{}lcrrcccccccrccrrrr@{}}
    \toprule[1pt]
                                         &        &        &          &        &        &        & \multicolumn{7}{c}{Last (outer) iteration for \wrapper}            & \multicolumn{4}{c}{Time in seconds}            \\
    \cmidrule[.75pt](r{2pt}){8-14} \cmidrule[.75pt](l{2pt}){15-18}
      Domain                             & $|\Q|$ & $|\S|$ &   $|\F|$ & Strat. &  Outer &  Inner & $|\Q'|$ &Inner$^*$& $|\X^+|$ & $|\X^-|$ & $|H|$ & $|\G|$ & $|\pi|$ &    Prep. &   \genex &     Verif. &      Total \\
    \midrule[.75pt]
\Verb!Blocks4ops-clear!                  &     53 &    172 &   13,579 & $S_1$ &      1 &      1 &       1 &      1 &        7 &        0 &    14 &      2 &       2 &       0.63 &       0.01 &       0.18 &      10.40 \\
\Verb!Delivery-1pkg!                     &    169 &    981 &    5,765 & $S_1$ &      1 &      1 &       1 &      1 &        9 &        0 &    18 &      4 &       4 &       0.27 &       0.02 &       3.21 &      24.61 \\
\Verb!Gripper!                           &      5 &     60 &    6,154 & $S_1$ &      1 &      1 &       1 &      1 &       19 &        0 &    38 &      3 &       4 &       0.46 &       0.21 &       3.95 &       8.22 \\
\Verb!Reward!                            &    207 &  1,119 &  249,122 & $S_1$ &      1 &      1 &       1 &      1 &       17 &        0 &    34 &      2 &       2 &      26.16 &      33.73 &       6.97 &     152.51 \\
\Verb!Visitall!                          &    460 &  4,288 &  146,085 & $S_1$ &      1 &      1 &       1 &      1 &       20 &        0 &    40 &      2 &       3 &      15.68 &       1.41 &     991.55 &   1,110.15 \\
\midrule
\Verb!Childsnack!                        &     10 &     85 &    2,900 & $S_1$ &      1 &      2 &       1 &      2 &       15 &        0 &    30 &      6 &       8 &       0.50 &       0.08 &      32.17 &      35.64 \\
\Verb!Spanner-1nut!                      &     90 &    540 &    8,001 & $S_1$ &      1 &      2 &       1 &      2 &        6 &        1 &    19 &      3 &       3 &       0.52 &       0.08 &       0.58 &      11.75 \\
\Verb!Logistics-1truck!                  &     67 &    363 &  262,509 & $S_1$ &      1 &     17 &       1 &     17 &       32 &        0 &    64 &      5 &       8 &     905.52 &     803.52 &       2.71 &   1,926.61 \\
\Verb!Barman-1cocktail-1shot!            &     90 &  1,350 &   69,040 & $S_1$ &      1 &     21 &       1 &     21 &       32 &        2 &   133 &     11 &      22 &     243.14 &     539.91 &     216.19 &   1,049.92 \\
\midrule
\Verb!Blocks4ops-on!                     &     94 &    518 &  183,322 & $S_1$ &      2 &      2 &       1 &      1 &       10 &        0 &    20 &      4 &       7 &      24.24 &       1.90 &       1.14 &     108.27 \\
\Verb!Spanner!                           &    270 &  2,160 &   13,679 & $S_1$ &      2 &      3 &       1 &      2 &       10 &        1 &    31 &      3 &       3 &       1.96 &       0.39 &     176.08 &     210.42 \\
\Verb!Delivery!                          &    397 &  3,635 &   13,904 & $S_1$ &      4 &     21 &       1 &      3 &       23 &        0 &    46 &      4 &       5 &      42.18 &       6.69 &      23.03 &     135.30 \\
\Verb!Ferry!                             &    180 &  1,416 &    8,547 & $S_1$ &      5 &     13 &       1 &      6 &       17 &        0 &    34 &      4 &       5 &       8.69 &       3.95 &       3.66 &      39.36 \\
\Verb!Miconic!                           &    360 &  3,504 &  107,785 & $S_1$ &      9 &     30 &       1 &      6 &       20 &        0 &    40 &      4 &       5 &     253.81 &      73.78 &      50.87 &     502.87 \\
\midrule
\Verb!8puzzle-1tile-fixed!               &     18 &    140 &   11,230 & $S_2$ &      7 &     38 &       4 &      3 &       40 &        0 &    80 &      8 &      14 &      44.77 &      25.52 &       1.89 &      81.54 \\
\Verb!8puzzle-1tile!                     &     16 &    122 &   12,417 & $S_2$ &      8 &     52 &       3 &      6 &       40 &        0 &    80 &      8 &      18 &      69.14 &      35.62 &       2.20 &     117.80 \\
\Verb!Blocks4ops!                        &      5 &    129 &  100,897 & $S_2$ &     10 &     66 &       3 &      5 &       81 &        0 &   162 &      7 &      24 &   5,349.49 &   7,159.42 &      19.62 &  12,863.98 \\
\Verb!Sokoban-1stone-7x7!                &      8 &     67 &  115,214 & $S_2$ &     12 &    309 &       5 &     15 &       94 &        4 &   671 &     15 &      50 &  14,248.40 &  12,712.36 &     896.13 &  28,196.51 \\
\Verb!Logistics-1pkg!                    &     24 &    173 &  225,518 & $S_2$ &     15 &    138 &       4 &      8 &       56 &        0 &   112 &      7 &      16 &   6,936.58 &   5,913.87 &     416.71 &  16,136.66 \\
\Verb!Zenotravel-1plane!                 &     73 &    779 &   24,959 & $S_2$ &     28 &    266 &       7 &      3 &      122 &        0 &   244 &      7 &      18 &   2,039.20 &     277.37 &   2,905.90 &   5,406.45 \\
    \bottomrule[1pt]
  \end{tabular}}
  \caption{Results for the 20 domains in categories C1--C4 for which \wrapper yields a general policy. As it can be seen, \wrapper can handle hundreds of thousands of features,
    and instances with millions of reachable states (e.g., all instances in \protect\DOMAIN{Blocks4ops} have 10 blocks).
  }
  \label{table:1}
\end{table*}

The table is vertically divided into four parts corresponding to the categories C1--C4, respectively.
The 5 domains in the top are solved fast, in a few seconds, except \DOMAIN{Visitall}
that requires 18.5min, from which about 16.5min are spent in the verification over the 460
instances in $\Q$.
In the second part, there are domains with dead-end states, like \DOMAIN{Spanner-1nut},
which are used to augment the set $\X^-$ of bad transitions. Interestingly, just few
bad transitions are needed in domains with dead-end state: 1 for both versions of \DOMAIN{Spanner},
2 for \DOMAIN{Barman-1cocktail-1shot}, and 4 for \DOMAIN{Sokoban-1stone-7x7}.
The domains in the third part require just one example to find a general policy, but
such an example is not the first one considered as determined by a static ordering of
the instances.
In \DOMAIN{Blocks4ops-on}, for example, the second example suffices
without the need to consider additional good or bad transitions (i.e., $\text{Inner}^*\,{=}\,1$).
Finally, the domains at the bottom of the table required the strategy $S_2$ that
simultaneously considers transitions from more than one example path.
Strategy $S_2$ is used after $S_1$ considers all the $\Q'$ subsets (i.e., each singleton
$\Q'$ yields a policy that solve $\Q'$ but not $\Q$).
Regarding times, for the majority of the cases, a general policy is found in a few
minutes (13 domains finished in less than 10 minutes).
\DOMAIN{Sokoban-1stone-7x7} takes 7 hours and 50 minutes for 309 calls to \genex
in order to learn a policy that is able to solve 8 instances of Sokoban with 1 stone
on a $7\,{\times}\,7$ grid. The policy has 50 rules and 15 features.
It is a policy that is highly over fit to the training set.

For the remaining 14 domains, \wrapper was not able to find a general policy.
Table~\ref{table:2} summarizes the results, with columns similar to Table~\ref{table:1},
except for a new column entitled `Reason' that explains the failure of \wrapper:
`Edge' if there is a transition $(s,t)$ in $\X^+$ for which the pool contains no feature $f$ that
changes across $(s,t)$,
and `Timeout'
(when the process is killed and no timing data is available).
The `Edge' failure is due to \emph{lack of expressivity in the pool.}
We believe that for most cases, it would not be enough to just increase
the complexity bounds that are used to generate the pool of features.
Rather, it is simply that the features needed to express a general policy
fall outside the class of features that are captured by the grammar; i.e., features definable with 2-variable logic and
counting quantifiers.

\Omit{
  Table~\ref{table:1} is divided in three parts.
  The top that contains the domains solved
  in just one iteration of the wrapper, with the `Forward' strategy (i.e., $|\Q'|\,{=}\,1$),
  and ordered by the number of iterations for \genex.
  For the 13 domains in the top part, the very first example path considered was sufficient
  for obtaining a general policy, and among these, the top 5 domains only required a single
  iteration of \genex, meaning that the first solution of the induced hitting set problem
  was sufficient.
  The domains in the middle required more than one iteration of the wrapper, but they
  were also solved by the `Forward' strategy. In this batch, we found \DOMAIN{Spanner}, for example,
  that has dead-end states, some of which encountered during verification and used to augment
  the set of bad transitions. The simpler version of \DOMAIN{Spanner}, with just one nut, appears as
  a different domain as such instances are the only ones solved by \siw (for \DOMAIN{Spanner}, the
  \bfws planner was used instead).
  As it can seen, only problems with dead-end states have a non-zero number of bad transitions.
  The bottom part contains domains that required the `Forward+' strategy that considers
  sets $\X^+$ containing transitions from more than one example path; the column $|\Q'|$ shows
  the number of paths considered in the last, successful, iteration of \wrapper.
  Although $|\Q|\,{\leq}\,4$ in all cases, there are outer iterations for which $|\Q'|$
  may be bigger than 4.
  Finally, it is interesting to note that in all cases, just a few good transitions, and in some
  cases very few bad transitions, are needed for learning a general stratified policy,
  illustrating the amount of information that example paths convey.

  Regarding times, for the majority of the cases, a general policy is found in a few minutes
  (15 domains finished in less than 10 minutes). Only \DOMAIN{Blocks3ops} and \DOMAIN{Logistics-indexicals}
  required a considerable amount of time, about 2h and 19h, respectively. This is due to the
  large number of total inner iterations performed, and the time spent during verification.
  \textcolor{red}{Times don't add up to total time. Check}

  \subsection{Failed Experiments}
  \label{exp:results:failed}

  Table~\ref{table:2} has information about failed runs, with columns similar
  to those in Table~\ref{table:1}, but with a new column titled `Reason' that
  explains the reason of failure for not finding a general policy.
  The fact that a specific reason for failure can be identified is another
  major difference with the SAT/ASP approaches, where little can be inferred
  when the solver fails to find a model.
  The reasons for failure are either: `Edge' meaning that there is a
  transition $(s,t)$ in $\X^+$ for which no feature in the pool $\F$
  changes across the transition and can be added to the hitting set,
  `Deadend' meaning that there is a transition $(s,t)$ in $\X^-$ for which
  no feature that distinguish it from some transition in $\X^+$ exists
  in the pool or can be added, `Exhausted' meaning that \wrapper
  iterated over all selections $\Q'$, each such $\Q'$ was solved,
  but none yield a policy that solved $\Q$, and `Time' meaning that
  time ran out.

  As it can be seen, there are some domains in both tables.
  For example, \DOMAIN{8puzzle-1tile} appears twice in Table~\ref{table:2}
  with reasons `Edge' and `Exhausted'.
  The first time, the problem, lack of sufficiently expressive features,
  was solved by increasing the number of features in the pool (from 2,697 to 12,417);
  the second time, the problem, insufficient information on any single example path,
  was solved by changing the wrapper strategy to `Forward+'.
  With these two changes, the task was solved as shown in Table~\ref{table:1}.
  Similar analysis can be done for other domains in the table.
  The successful runs with the `Forward+' strategy resulted from
  failures with reason `Exhausted'.
}

\begin{table*}[t]
  \centering
  \resizebox{\linewidth}{!}{
  \begin{tabular}{@{}lcrrcccccccrcrrrr@{}}
    \toprule[1pt]
                                         &        &        &          &        &        &        & \multicolumn{6}{c}{Last (outer) iteration for \wrapper}            & \multicolumn{4}{c}{Time in seconds}           \\
    \cmidrule[.75pt](r{2pt}){8-13} \cmidrule[.75pt](l{2pt}){14-17}
      Domain                             & $|\Q|$ & $|\S|$ &   $|\F|$ & Strat. &  Outer &  Inner & $|\Q'|$ &Inner$^*$& $|\X^+|$ & $|\X^-|$ & $|H|$ &           Reason &    Prep. &   \genex &     Verif. &      Total \\
    \midrule[.75pt]
\Verb!Rovers!                            &    608 &  6,238 &  190,064 & $S_1$ &      1 &      1 &       1 &      1 &       25 &        0 &    50 &             Edge &      17.26 &       0.53 &       0.00 &     225.75 \\
\Verb!Tidybot-opt11-strips!              &      8 &    211 &   59,402 & $S_1$ &      1 &      1 &       1 &      1 &       40 &        0 &    80 &             Edge &       7.70 &       0.21 &       0.00 &     119.06 \\
\Verb!Tpp!                               &     11 &    608 &   14,128 & $S_1$ &      1 &      1 &       1 &      1 &      201 &        0 &   402 &             Edge &       9.32 &       0.21 &       0.00 &     237.19 \\
\Verb!8puzzle-2tiles!                    &     16 &    207 &    4,458 & $S_1$ &      1 &      2 &       1 &      2 &       20 &        0 &    40 &             Edge &       0.85 &       0.14 &       0.18 &       6.30 \\
\Verb!Hiking!                            &    180 &  1,215 &   16,723 & $S_1$ &      1 &      2 &       1 &      2 &        8 &        0 &    16 &             Edge &       1.41 &       0.06 &       4.02 &     190.05 \\
\Verb!Depot!                             &     18 &    851 &  255,079 & $S_1$ &      1 &     17 &       1 &     17 &      119 &        0 &   238 &             Edge &   4,771.50 &     379.59 &     415.29 &  15,511.73 \\
\Verb!Freecell!                          &     65 &  2,842 &  146,428 & $S_1$ &      1 &     35 &       1 &     35 &      133 &       19 & 2,964 &          Timeout &      -1.00 &      -1.00 &      -1.00 &      -1.00 \\
\Verb!Barman-1cocktail!                  &    270 &  3,998 &   69,040 & $S_1$ &      1 &     94 &       1 &     94 &      105 &        6 &   868 &             Edge &   3,056.59 &   5,474.44 &      36.43 &   8,769.37 \\
\Verb!Tetris-opt14-strips!               &     16 &    393 &   37,496 & $S_1$ &      1 &    142 &       1 &    142 &      180 &        1 &   550 &             Edge &   5,109.83 &   3,458.80 &   7,157.74 &  16,500.19 \\
\Verb!Satellite!                         &    950 &  6,716 &  171,475 & $S_1$ &      2 &    209 &       1 &    157 &      168 &        0 &   334 &          Timeout &      -1.00 &      -1.00 &      -1.00 &      -1.00 \\
\Verb!Driverlog!                         &    381 &  3,197 &  172,818 & $S_1$ &      6 &    152 &       1 &     16 &       31 &        0 &    62 &             Edge &   5,768.13 &   1,822.26 &      52.39 &   8,088.94 \\
\Verb!Zenotravel-1person!                &     80 &    500 &   42,271 & $S_1$ &     12 &    385 &       1 &     77 &       80 &        3 &   507 &             Edge &   2,262.94 &     350.48 &   1,359.22 &   4,701.44 \\
\Verb!Logistics!                         &    282 &  1,855 &   51,418 & $S_2$ &     11 &     72 &       4 &      5 &       99 &        0 &   198 &             Edge &     987.57 &     169.34 &      15.41 &   1,310.91 \\
\Verb!Blocks3ops!                        &    392 &  2,216 &  236,954 & $S_2$ &     17 &    102 &       7 &      8 &      106 &        0 &   206 &          Timeout &      -1.00 &      -1.00 &      -1.00 &      -1.00 \\
    \bottomrule[1pt]
  \end{tabular}
  }
  \caption{Results for the 14 domains in category C5 for which \wrapper is unable to find a general policy. The column `Reason' explains the failure: `Edge' means that there
    is good transition $(s,t)$ for which no feature in the pool changes across $(s,t)$, and `Timeout' means the solver did not finish after 12 hours.
  }
  \label{table:2}
\end{table*}

\subsection{Further Testing of Policies}

The learned policies solve the entire class $\Q$, even though, in many cases,
the number of instances seen during learning is a fraction of $\Q$.
In this section, we go further and test the policies on instances that are significantly
larger than the ones used for training.
For example, for \DOMAIN{Blocks4ops} 
the training set contains 5 instances with 10 blocks each,
but we evaluate the resulting policy on instances with 20--45 blocks.

Table~\ref{table:3} shows statistics for this extended test. For each depicted domain,
the table contains the number of instances, the percentage of coverage (percentage of
solved instances), and the maximum and average \textbf{effective width}.

The \textbf{width of a sketch} $\pi$ on state $s$ for instance $P$ is the minimum
integer $k$ needed for the search algorithm IW$(k)$ to find a state $t$ from $s$
such that the pair $(s,t)$ is compatible with $\pi$ \cite{drexler:icaps2022}.
If the sketch $\pi$ is a policy, such state $t$ is a successor of $s$, and IW$(0)$ finds it.
Else, if $\pi$ is not a proper policy, it is regarded as a sketch and paired with IW$(k)$ for some $k\,{>}\,0$; $k\,{=}\,2$ in this test.
The max (resp.\ average) \textbf{effective width} 
for instance $P$ is the max (resp.\ average) width of $\pi$ for the states encountered when using $\pi$,
and the max (resp.\ average) effective width 
for the class $\Q$ is the maximum of the max (resp.\ average) effective width of $\pi$ over instances $P$ in $\Q$.
\Omit{
  We use \siwr \cire{siwr} to apply the learned policy $\pi$ over test instances.
  For a current state $s$, on a given test instance $P$, \siwr performs a bounded (IW)
  search from $s$ looking for a state $t$ such that the pair $(s,t)$ is compatible
  with $\pi$. The effort to find $t$ is measure by the \textbf{width} of the search:
  if $t$ is a successor, the effort is almost null and the width is $0$,
  otherwise, the width is a positive integer. We use a bound of 2 on the width for the search.
  The \textbf{effective width} of $\pi$ on instance $P$ refers to how difficult is
  to find the pairs $(s,t)$ when applying $\pi$ on $P$.
}

Values for coverage and effective width that deviate from 100\% and 0, respectively, are
highlighted in Table~\ref{table:3}, as they exhibit flaws of the policy $\pi$ when used on large instances.
Nonetheless, in all cases, except \DOMAIN{Childsnack} with a coverage of 41.7\%, the
learned policy solves the large instances in the test set.
Likewise, except for \DOMAIN{8puzzle}, \DOMAIN{Blocks4ops}, and \DOMAIN{Childsnack},
an effective width of 0 tells us that the learned policy is indeed a policy that selects
transitions that lead to goal states.
The case for the two version of the 8-puzzle is interesting.
The policies are learned using only instances for the 8-puzzle, but the test
set includes instances for the $(n^2-1)$-puzzle, $n\,{=}\,2,3,\ldots,6$.
The effective width is bigger than 0 \textbf{only} on some instances for $n$ in \set{4,5,6}.

\begin{table}
  \centering
  \resizebox{\linewidth}{!}{
  \begin{tabular}{@{}l rrrr@{}}
    \toprule[1pt]
                                         &         &        & \multicolumn{2}{c}{Effective width} \\
    \cmidrule[.75pt](r{2pt}){4-5}
      Domain                             & \#inst. & \%Coverage & max. & avg. \\
    \midrule[.75pt]
\Verb!8puzzle-1tile-fixed!               &  100 &  100.0\% & \cellcolor{blue!25}2.00 & \cellcolor{blue!25}0.50 \\
\Verb!8puzzle-1tile!                     &  100 &  100.0\% & \cellcolor{blue!25}2.00 & \cellcolor{blue!25}0.32 \\
\Verb!Blocks4ops-clear!                  &   30 &  100.0\% & 0.00 & 0.00 \\
\Verb!Blocks4ops-on!                     &   30 &  100.0\% & 0.00 & 0.00 \\
\Verb!Blocks4ops!                        &   30 &  100.0\% & \cellcolor{blue!25}2.00 & \cellcolor{blue!25}0.05 \\
\Verb!Childsnack!                        &  120 & \cellcolor{blue!25}  41.7\% & \cellcolor{blue!25}1.00 & \cellcolor{blue!25}0.20 \\
\Verb!Delivery!                          &   30 &  100.0\% & 0.00 & 0.00 \\
\Verb!Ferry!                             &   30 &  100.0\% & 0.00 & 0.00 \\
\Verb!Gripper!                           &   30 &  100.0\% & 0.00 & 0.00 \\
\Verb!Logistics-1pkg!                    &  420 &  100.0\% & 0.00 & 0.00 \\
\Verb!Logistics-1truck!                  &   55 &  100.0\% & 0.00 & 0.00 \\
\Verb!Miconic!                           &   31 &  100.0\% & 0.00 & 0.00 \\
\Verb!Reward!                            &   30 &  100.0\% & 0.00 & 0.00 \\
\Verb!Zenotravel-1plane!                 &  180 &  100.0\% & 0.00 & 0.00 \\
    \bottomrule[1pt]
  \end{tabular}
  }
  \caption{Coverage and effective width of some learned policies on large instances. 
    Values that reveal flaws of the policies when applied on large instances are highlighted.
    The few number of highlighted cells shows that the learned policies are robust on
    instances that are significantly larger than the ones in the training sets.
  }
  \label{table:3}
\end{table}

\section{Conclusions}
\label{sect:conclusions}

We have introduced a novel formulation and algorithms for learning
generalized policies from examples computed by a planner.
In many cases, we have shown that the method yields general policies
by generalizing a single plan. In other cases, a few plans need to
be considered.

Two key contributions in relation to existent methods are that the
new method scales up to much larger pools of features and training instances,
and this enables the solution of domains that could not be addressed before.
At the same time, this is the first method in which the resulting general
policies are \emph{acyclic by design.}
This is achieved through the introduction of a new powerful structural
termination criterion that is built-in in the selection of the features.

The new learning algorithm is made up of a core algorithm, \genex, that is
implemented by a fast and efficient hitting-set algorithm, and the \wrapper
around it, that feeds \genex with transitions in $\X^+$ to be included in
the policy, and others in $\X^-$ to be excluded.
A shortcoming of \wrapper is that it is not complete: there could be positive
and negative transitions that yield a general policy over the target class of
problems, but the wrapper may fail to find them.
An improved, complete and efficient, wrapper around the complete and efficient
\genex is left as a challenge for future work.

\vfill
\pagebreak

\section*{Acknowledgments}

The research of H.\ Geffner has been supported by the Alexander von Humboldt Foundation with
funds from the Federal Ministry for Education and research, by the European Research Council (ERC),
Grant agreement No.\ 885107, and by the Excellence Strategy of the Federal Government and the NRW L\"ander, Germany.

\bibliographystyle{kr}
\bibliography{control}

\end{document}